\newcommand{\figspath}{.}
\newcommand{\ga}{\alpha}
\newcommand{\gs}{\sigma}
\newcommand{\gS}{\Sigma}
\newcommand{\gx}{\xi}
\newcommand{\gV}{\Theta}
\newcommand{\gm}{\mu}
\newcommand{\gn}{\nu}
\def\eqref#1{equation~\ref{#1}}
\def\1{\bm{1}}
\def\ba{{\mathbf{a}}}
\def\bff{{\mathbf{f}}}
\def\bg{{\mathbf{g}}}
\def\bu{{\mathbf{i}}}
\def\bm{{\mathbf{m}}}
\def\bp{{\mathbf{p}}}
\def\bs{{\mathbf{s}}}
\def\bu{{\mathbf{u}}}
\def\bz{{\mathbf{z}}}
\DeclareMathAlphabet{\mathsfit}{\encodingdefault}{\sfdefault}{m}{sl}
\SetMathAlphabet{\mathsfit}{bold}{\encodingdefault}{\sfdefault}{bx}{n}
\newcommand{\e}{\varepsilon}
\newcounter{lemcounter}
\newtheorem{lemma}[lemcounter]{Lemma}
\title{Dynamical Isometry and a Mean Field Theory of LSTMs and GRUs}
\begin{document}

	\onecolumn
	\icmltitle{Dynamical Isometry and a Mean Field Theory of LSTMs and GRUs}
	
	
	\icmlsetsymbol{equal}{*}
	
	\begin{icmlauthorlist}
		\icmlauthor{Dar Gilboa}{nbb,dsi,equal}
		\icmlauthor{Bo Chang}{bc,equal}
		\icmlauthor{Minmin Chen}{gb}
		\icmlauthor{Greg Yang}{m}
		\icmlauthor{Samuel S. Schoenholz}{gb}
		\icmlauthor{Ed H. Chi}{gb}
		\icmlauthor{Jeffrey Pennington}{gb}
	\end{icmlauthorlist}
	
	\icmlaffiliation{nbb}{Department of Neuroscience, Columbia University}
	\icmlaffiliation{dsi}{Data Science Institute, Columbia University}
	\icmlaffiliation{gb}{Google Brain}
	\icmlaffiliation{m}{Microsoft Research}
	\icmlaffiliation{bc}{Department of Statistics, University of British Columbia}
	
	\icmlcorrespondingauthor{Dar Gilboa}{dargilboa@gmail.com}
	
	\icmlkeywords{Machine Learning, ICML}
	
	\vskip 0.3in
	\printAffiliationsAndNotice{\icmlEqualContribution} 
\begin{abstract}
Training recurrent neural networks (RNNs) on long sequence tasks is plagued with difficulties arising from the exponential explosion or vanishing of signals as they propagate forward or backward through the network. Many techniques have been proposed to ameliorate these issues, including various algorithmic and architectural modifications. Two of the most successful RNN architectures, the LSTM and the GRU, do exhibit modest improvements over vanilla RNN cells, but they still suffer from instabilities when trained on very long sequences. In this work, we develop a mean field theory of signal propagation in LSTMs and GRUs that enables us to calculate the time scales for signal propagation as well as the spectral properties of the state-to-state Jacobians. By optimizing these quantities in terms of the initialization hyperparameters, we derive a novel initialization scheme that eliminates or reduces training instabilities. We demonstrate the efficacy of our initialization scheme on multiple sequence tasks, on which it enables successful training while a standard initialization either fails completely or is orders of magnitude slower. We also observe a beneficial effect on generalization performance using this new initialization.
\end{abstract}

\section{Introduction}
A common paradigm for research and development in deep learning involves the introduction of novel network architectures followed by experimental validation on a selection of tasks. While this methodology has undoubtedly generated significant advances in the field, it is hampered by the fact that the full capabilities of a candidate model may be obscured by difficulties in the training procedure.
It is often possible to overcome such difficulties by carefully selecting the optimizer, batch size, learning rate schedule, initialization scheme, or other hyperparameters. However, the standard strategies for searching for good values of these hyperparameters are not guaranteed to succeed, especially if the trainable configurations are constrained to a low-dimensional subspace of hyperparameter space, which can render random search, grid search, and even Bayesian hyperparameter selection methods unsuccessful.

\begin{figure}[h]
	\begin{minipage}[c]{0.4\textwidth}
		\includegraphics[width=\textwidth]{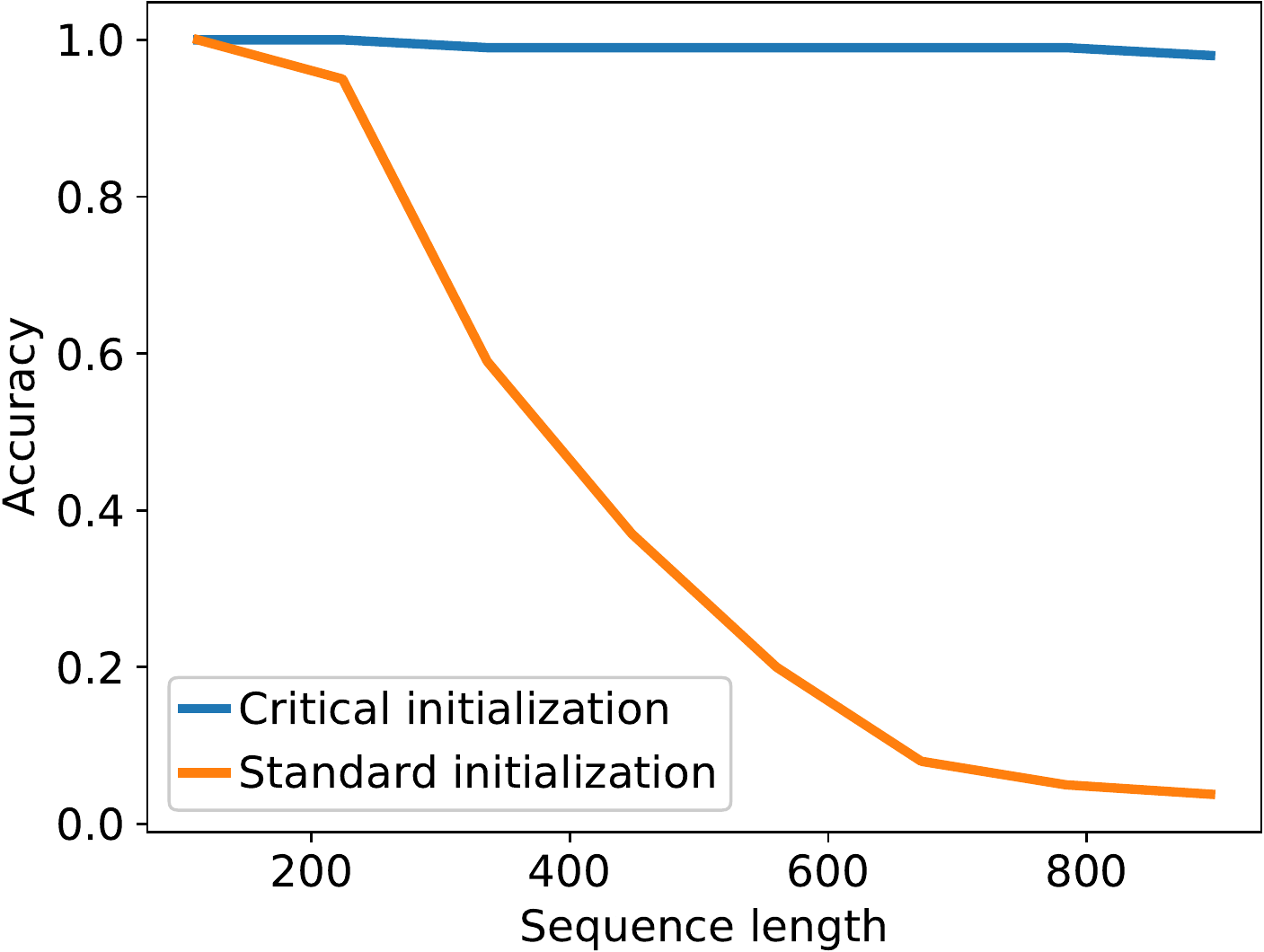}
	\end{minipage}\hfill
	\begin{minipage}[c]{0.57\textwidth}
	\caption{\textit{Critical initialization improves trainability of recurrent networks.} Test accuracy for peephole LSTM trained to classify sequences of MNIST digits after 8000 iterations. As the sequence length increases, the network is no longer trainable with standard initialization, but still trainable using critical initialization.  } \label{fig_critvsstan}
	\end{minipage}
\end{figure}
In this work, we argue that for long sequence tasks, the trainable configurations of \textit{initialization hyperparameters} for LSTMs and GRUs lie in just such a subspace, which we characterize theoretically. In particular, we identify precise conditions on the hyperparameters governing the initial weight and bias distributions that are necessary to ensure trainability. These conditions derive from the observation that in order for a network to be trainable, (a) signals from the relevant parts of the input sequence must be able to propagate all the way to the loss function and (b) the gradients must be stable (i.e., they must not explode or vanish exponentially).

As shown in Figure~\ref{fig_critvsstan}, training of recurrent networks with standard initialization on certain tasks begins to fail as the sequence length increases and signal propagation becomes harder to achieve. However, as we shall show, a suitably-chosen initialization scheme can dramatically improve trainability on such tasks.



We study the effect of the initialization hyperparameters on signal propagation 
for a very broad class of recurrent architectures, which includes as special cases many state-of-the-art RNN cells, including the GRU~\cite{cho2014learning}, the  LSTM~\cite{hochreiter1997long}, and the peephole LSTM~\cite{gers2002learning}. The analysis is based on the mean field theory of signal propagation developed in a line of prior work~\cite{schoenholz2016deep,xiao2018dynamical,chen2018dynamical,yang_mean_2018}, as well as the concept of dynamical isometry~\cite{saxe2013exact,pennington2017resurrecting,pennington2018emergence} that is necessary for stable gradient backpropagation and which was shown to be crucial for training simpler RNN architectures~\cite{chen2018dynamical}. We perform a number of experiments to corroborate the results of the calculations and use them to motivate initialization schemes that outperform standard initialization approaches on a number of long sequence tasks.
\section{Background and related work}
\subsection{Mean field analysis of neural networks}

Signal propagation at initialization can be controlled by varying the hyperparameters of fully-connected \cite{schoenholz2016deep,yang2017mean} and convolutional \cite{xiao2018dynamical} feed-forward networks, as well as for simple gated recurrent architectures \cite{chen2018dynamical}. In all these cases, such control was used to obtain initialization schemes that outperformed standard initializations on benchmark tasks. In the feed-forward case, this enabled the training of very deep architectures without the use of batch normalization or skip connections. 

By forward signal propagation, we specifically refer to the propagation of correlations between inputs at the start of an input sequence through the hidden states of the recurrent network at later times, as will be made precise in Section \ref{secforward}. This appears to be a necessary but not sufficient condition for trainability on certain tasks. Another is the stability of the gradients,, which depend on the state-to-state Jacobian matrix, as discussed in \cite{bengio1994learning}. In our context, the goal of the backward propagation analysis is to improve the conditioning of the Jacobian by controlling the first two moments of its squared singular values. 
Forward signal propagation and the spectral properties of the Jacobian at initialization can be studied using mean field theory and random matrix theory \cite{poole2016exponential,schoenholz2016deep,yang2017mean,yang2018deep,xiao2018dynamical,pennington2017resurrecting,pennington2018emergence, chen2018dynamical,yang_mean_2018}. 

As neural network training is a nonconvex problem, using a modified initialization scheme could lead to convergence to different points in parameter space in a way that adversely affects the generalization error. We provide some empirical evidence that this does not occur, and in fact, the use of initialization schemes satisfying these conditions has a beneficial effect on the generalization error.

\subsection{The exploding/vanishing gradient problem and signal propagation in recurrent networks}
The exploding/vanishing gradient problem is a well-known phenomenon that hampers training on long time sequence tasks \cite{bengio1994learning,pascanu2013difficulty}. Apart from the gating mechanism, there have been numerous proposals to alleviate the vanishing gradient problem by constraining the weight matrices to be exactly or approximately orthogonal \cite{pascanu2013difficulty,wisdom2016full,vorontsov2017orthogonality,jose2017kronecker}, or more recently by modifying some terms in the gradient \cite{arpit2018h}, while exploding gradients can be handled by clipping \cite{pascanu2013difficulty}. Another recently proposed approach to ensuring signal propagation in long sequence tasks introduces auxiliary loss functions \cite{trinh2018learning}. This modification of the loss can be seen as a form of regularization. \cite{chang2018antisymmetricrnn} study the connections between recurrent networks and certain ordinary differential equations and propose the AntisymmetricRNN that can capture long term dependencies in the inputs. While many of these approaches have been quite successful, they typically require modifying the training algorithm, the loss function, or the architecture, and as such exist as complementary methods to the one we investigate here. We postpone the investigation of a combination of techniques to future work.

\subsection{Dynamics beyond initialization}
While the analysis below applies to the network at initialization, it was recently shown that as the network width becomes large, parameters move very little from their initial values \cite{jacot2018neural,lee2019wide}. A non-trivial and surprising consequence is that covariance matrices like those studied here remain constant during training. While  \cite{jacot2018neural,lee2019wide} do not specifically study RNNs, the analysis of \cite{yang2019scaling} suggests the conclusions are likely to carry over. Moreover, even for finite-width networks, simply satisfying the conditions derived in this work at initialization can lead to a dramatic improvement in trainability as we show below.


\section{Notation}
We denote matrices by bold upper case Latin characters and vectors by bold lower case Latin characters. $\mathcal{D}x$ denotes a standard Gaussian measure. The normalized trace of a random $N \times N$ matrix $\mathbf{A}$, $\frac{1}{N}\mathbb{E} \mathrm{tr}(\mathbf{A})$, is denoted by $\tau(\mathbf{A})$. $\circ$
is the Hadamard product, $\sigma(\cdot)$ is a sigmoid function and
both $\sigma(\cdot),\tanh(\cdot)$ act element-wise. We denote by $\mathbf{D}_{\mathbf{a}}$ a diagonal matrix with $\mathbf{a}$ on the diagonal.

\section{Mean field analysis of signal propagation and dynamical isometry}

\subsection{Model description and important assumptions}
\begin{table*}[h]
	\begin{minipage}{\linewidth} 
	    \centering
		\setlength{\tabcolsep}{0.1em}
		\begin{tabular}{ c || c | c | c } 
			& Vanilla RNN   &GRU \cite{cho2014learning} &LSTM \cite{hochreiter1997long} \\
			\hline			
			$K$ & $\{f\}$ & $\{f,r\}$ & $\{i,f,r,o\}$ \\
			$\bg_k$ & . & $\gs(\bu_{r})$ & . \\
			\hline
			$\bff$ & $\sigma(\bu_f^t)$  & $\begin{array}{c}
			\sigma(\mathbf{u}_{f}^{t})\circ\mathbf{s}^{t-1}\\
			+(\mathbf{1}-\sigma(\mathbf{u}_{f}^{t}))\circ\tanh(\mathbf{u}_{r_2}^{t})
			\end{array}$ & $\begin{array}{c}
			\sigma(\mathbf{u}_{o}^{t})\circ\tanh(\mathbf{c}^{0}+\\
			\stackrel[j=1]{t}{\sum}\stackrel[l=j]{t}{\Pi}\sigma(\mathbf{u}_{f}^{l})\circ\sigma(\mathbf{u}_{i}^{j})\circ\tanh(\mathbf{u}_{r}^{j}))
			\end{array}$ \\
		\end{tabular}   
		\caption{A number of recurrent architectures written in the form \ref{eq_s_system}. $K$ is the set of pre-activation subscripts, $f$ is the state update function in eqn.~(\ref{eq_s_system}a) and $\bg_k$ is the function in eqn.~(\ref{eq_s_system}c). The LSTM cell state is unrolled in order to emphasize that it can be written as a function of variables that are Gaussian at the large $N$ limit. See Table \ref{tab_rnns2} for additional architectures.}
		\label{tab_rnns}
	\end{minipage} 
\end{table*}

We begin with a general description of recurrent architectures that can be specialized to the GRU, LSTM and peephole LSTM among others. We denote the state of a recurrent network at time $t$ by $\mathbf{s}^{t} \in \mathbb{R}^N$ with $s_i^{0}\sim\mathcal{D}_{0}$, and a sequence of inputs to the network
 by  $\{\mathbf{z}^{1},...,\mathbf{z}^{T}\},\mathbf{z}^{t} \in \mathbb{R}^N$. 
\begin{subequations} \label{eq_s_system}
	\begin{gather}
	\intertext{The state evolution of the network is given by}
	\mathbf{s}^{t}= \mathbf{f}(\mathbf{s}^{t-1},\{\mathbf{u}_{k}^1\},...,\{\mathbf{u}_{k}^t\},\bz^t)
	\intertext{where $\mathbf{f}$ is an element-wise, affine function of $\mathbf{s}^{t-1}$ and $\{\mathbf{u}_{k}^t\}$ is a set of pre-activations $\mathbf{u}_{k}^t \in \mathbb{R}^{N}, k \in K $ defined for a set of subscripts $K$. They are given by}
	\mathbf{u}_{k}^{t}=\mathbf{W}_{k}\mathbf{s}^{t-1}+\mathbf{U}_{k}\mathbf{z}^{t}+\mathbf{b}_{k}\\
	\intertext{where $\mathbf{W}_{k},\mathbf{U}_{k}\in\mathbb{R}^{N\times N},\mathbf{b}_{k}\in\mathbb{R}^{N}$. We define additional pre-activations} 
	\mathbf{u}_{ k	_2}^{t}=\mathbf{W}_{ k_2}\mathbf{D}_{\mathbf{g}_{k}(\mathbf{u}_{ k}^{t})}\mathbf{s}^{t-1}+\mathbf{U}_{ k_2}\mathbf{z}^{t}+\mathbf{b}_{ k_2} 
	\end{gather}
\end{subequations}
where $\mathbf{g}_{k}:\mathbb{R}^{N} \rightarrow \mathbb{R}^{N}$ is an element-wise function and $\mathbf{u}^t_{ k}$ is defined as in eqn.~(\ref{eq_s_system}b) \footnote{Variables of the form (\ref{eq_s_system}c) will be present only in the GRU (see Table \ref{tab_rnns}).}. In cases where there is no need to distinguish between variables of the form (\ref{eq_s_system}b) and (\ref{eq_s_system}c) we will refer to both as $\bu_k^t$.
This state update equation describes all the architectures studied in this paper, as well as many others, as detailed in Tables \ref{tab_rnns},\ref{tab_rnns2}. 
The pre-activations $\mathbf{u}^t_{ k}$, which are typically used to construct gates in recurrent network with the application of appropriate nonlinearities, will be of interest because they will tend in distribution to Gaussians as the network width is taken to infinity.

We assume $W_{k,ij}\sim\mathcal{N}(0,\sigma_{k}^{2}/N),U_{k,ij}\sim\mathcal{N}(0,\nu_{k}^{2}/N),b_{k, i}\sim\mathcal{N}(\mu_{k},\rho_{k}^{2})$ i.i.d. and denote $\Theta=\underset{k}{\bigcup}\{\sigma_{k}^{2},\nu_{k}^{2},\rho_{k}^{2},\mu_{k}\}$.
As in \cite{chen2018dynamical}, we make the \textit{untied weights assumption} that $\mathbf{W}_{k}$ is independent of $\mathbf{s}^{t}$. 
Tied weights would increase the autocorrelation of states across time, but this might be dominated by the increase due to the correlations between input tokens when the latter is strong. 
Indeed, we provide empirical evidence that calculations performed under this assumption still have considerable predictive power in cases where it is violated.
\subsection{Forward signal propagation} \label{secforward}
We now study how correlations between inputs propagate into the hidden states of a recurrent network. This section follows closely the development in \cite{chen2018dynamical}. We now consider two sequences of normalized inputs $\{\mathbf{z}_{a}^{t}\},\{\mathbf{z}_{b}^{t}\}$ with zero mean and covariance $\mathbf{R}=R\left(\begin{array}{cc}
1 & \Sigma_z\\
\Sigma_z & 1
\end{array}\right)$
fed into two copies of a network with identical weights, and resulting in sequences of states $\{\mathbf{s}_{a}^{t}\},\{\mathbf{s}_{b}^{t}\}$.
We consider the time evolution of the moments and correlations 
\begin{subequations} \label{eq_M0}
	\begin{gather}
	\mu_s^{t}=\mathbb{E}[s_{ia}^{t}]\\
	Q_s^{t}\equiv Q_{s,aa}^{t}=\mathbb{E}[s_{ia}^{t}s_{ia}^{t}] \\
	\gS^{t2}_sC_s^{t}+(\mu^t_s)^{2}=\mathbb{E}[s_{ia}^{t}s_{ib}^{t}]
	\end{gather}
\end{subequations} where we define $\gS^{t2}_s=Q^{t}_s-(\mu_s^{t})^{2}$.

Returning to the pre-activations defined in eqn.~(\ref{eq_s_system}). We make the \textit{mean field approximation}\footnote{For feed-forward networks, it has been shown using the Central Limit Theorem for exchangeable random variables that the pre-activations at all layers indeed converge in distribution to a multivariate Gaussian \cite{matthews2018gaussian}.} that the pre-activations are jointly Gaussian at the infinite width $N\rightarrow\infty$ limit:
\begin{equation} \label{eq_v_gauss}
\left(\begin{array}{c}
u_{kia}^{t}\\
u_{kib}^{t}
\end{array}\right)\sim\mathcal{N}\left(\left(\begin{array}{c}
\mu_{k}\\
\mu_{k}
\end{array}\right),\left(Q_{k}^{t}-\mu_{k}^{2}\right)\left(\begin{array}{cc}
1 & C_{k}^{t}\\
C_{k}^{t} & 1
\end{array}\right)\right)
\end{equation}
where the second moment $Q_{k}^{t}$ is given by
\begin{subequations} \label{eq_QC_k_def}
	\begin{gather}
Q_{k}^{t}=\mathbb{E}[u_{kia}^{t}u_{kia}^{t}]=\sigma_{k}^{2}\mathbb{E}[s_{ia}^{t}s_{ia}^{t}]+\nu_{k}^{2}\mathbb{E}[z_{ia}^{t}z_{ia}^{t}]+\rho_{k}^{2}+\mu_{k}^{2}=\sigma_{k}^{2}Q_{s}^{t}+\nu_{k}^{2}R+\rho_{k}^{2}+\mu_{k}^{2}\\
	\intertext{and defining $\gS^{t2}_k=Q^{t}_k-\mu_k^{2}$  }
	C_{k}^{t}=\frac{\mathbb{E}[u_{kia}^{t}u_{kib}^{t}]-\mu_{k}^{2}}{Q_{k}^{t}-\mu_{k}^{2}}=\frac{\sigma_{k}^{2}\left(\gS_{s}^{t2}C_{s}^{t}+\mu_{s}^{2}\right)+\nu_{k}^{2}R\Sigma_{z}+\rho_{k}^{2}}{Q_{k}^{t}-\mu_{k}^{2}}.
	\end{gather}
\end{subequations}
The variables $u_{k_2ia}^{t}, u_{k_2ib}^{t}$ given by eqn.~(\ref{eq_s_system}c) are also asymptotically Gaussian, with their covariance detailed in Appendix \ref{app_details}. We will subsequently drop the vector subscript $i$ since all elements are identically distributed and $\mathbf{f},\mathbf{g}_{k}$ act element-wise, and the input sequence index in expressions that involve only one sequence.   

For any $l \leq t$, the $u_k^l$ are independent of $s^l$ at the large $N$ limit. Combining this with the fact that their distribution is determined completely by $\mu_{s}^{l-1},Q_{s}^{l-1},C_{s}^{l-1}$ and that $\mathbf{f}$ is affine, one can rewrite eqn.~(\ref{eq_M0}a-c) using eqn.~(\ref{eq_s_system}a-c) as the following deterministic dynamical system
\begin{equation} \label{eq_M}
(\gm_s^t, Q_{s}^{t}, C_{s}^{t})=\mathcal{M}(\mu_{s}^{t-1},Q_{s}^{t-1},C_{s}^{t-1},...,\mu_{s}^{1},Q_{s}^{1},C_{s}^{1})
\end{equation}
where the dependence on $\Theta$ and the data distribution has been suppressed. In the peephole LSTM and GRU, the form will be greatly simplified to $\mathcal{M}(\mu_{s}^{t-1},Q_{s}^{t-1},C_{s}^{t-1})$. In Appendix~\ref{app_exp} we compare the predicted dynamics to simulations, showing good agreement.

One can now study the fixed points of eqn.~(\ref{eq_M}) and the rates of convergence by linearizing the dynamics. The fixed point of eqn.~(\ref{eq_M}) is pathological, in the sense that any information that distinguishes two input sequences is lost. Therefore, delaying the convergence to the fixed point should allow for signals to propagate across longer time horizons. Quantitatively, \textbf{the rate of convergence of eqn.~(\ref{eq_M}) to a fixed point gives an effective time scale for signal propagation from the inputs to the hidden state at later times}.

While the dynamical system is two-dimensional and analysis of convergence rates should be performed by linearizing the full system and studying the smallest eigenvalue of the resulting matrix, in practice as in \cite{chen2018dynamical} this eigenvalue appears to always corresponds to the $C_{s}^{t}$ direction\footnote{This observation is explained in the case of fully-connected feed-forward networks in \cite{schoenholz2016deep}.} . Hence, if we assume convergence of $Q_{s}^{t},\mu_{s}^{t}$ we need only study 
\begin{equation} \label{eq_Mc}
C_{s}^{t}=\mathcal{M}_{C}(\mu_{s}^{\ast},Q_{s}^{\ast},C_{s}^{t-1})
\end{equation}
where $\mathcal{M}_{C}$ also depends on expectations of functions of $\{\mathbf{u}_{k}^1\},...\{\mathbf{u}_{k}^{t-2}\}$ that do not depend on $C_s^{t-1}$. While this dependence is in principle on an infinite number of Gaussian variables as the dynamics approach the fixed point, $\mathcal{M}_{C}$ can still be reasonably approximated in the case of the LSTM as detailed in Appendix~\ref{app_LSTM}. We show that in the case of the peephole LSTM this map is convex in Appendix~\ref{app_aux}. This can be shown for the GRU by a similar argument. It follows directly that it has a single stable fixed point in these cases.

The rate of approach to the fixed point, $\chi_{C_{s}^{\ast}}$, is given by linearizing around it. Setting $C_s^t=C_s^\ast + \e^t$, we have 
\begin{equation} \label{eq_chi_def}
C_{s}^{t+1}=C_{s}^{\ast}+\varepsilon^{t+1}=\mathcal{M}_{C}(\mu_{s}^{\ast},Q_{s}^{\ast},C_{s}^{\ast}+\varepsilon^{t})=\mathcal{M}_{C}(\mu_{s}^{\ast},Q_{s}^{\ast},C_{s}^{\ast})+\chi_{C_{s}^{\ast}}\varepsilon^{t}+O((\varepsilon^{t})^{2}).
\end{equation}

The time scale of convergence to the fixed point is thus given by 
\begin{equation} \label{eq_xi}
\xi_{C_{s}^{\ast}}=-\frac{1}{\log\chi_{C_{s}^{\ast}}}, 
\end{equation}
which diverges as $\chi_{C_{s}^{\ast}}$ approaches 1 from below. Due to the detrimental effect of convergence to the fixed point described above, it stands to reason that a choice of $\Theta$ such that $\chi_{C_{s}^{\ast}} = 1 - \delta$ for some small $\delta > 0$ would enable signals to propagate from the initial inputs to the final hidden state when training on long sequences. 

\subsection{Backwards signal propagation - the state-to-state Jacobian} \label{sec_jac}
We now turn to controlling the gradients of the network. A useful object to consider in this case is the asymptotic state-to-state transition Jacobian
\[
\mathbf{J}=\lim_{t\rightarrow\infty}\frac{\partial\mathbf{s}^{t+1}}{\partial \mathbf{s}^{t}}.
\]
This matrix and powers of it will appear in the gradients of the output with respect to the weights as the covariance dynamics described in Section~\ref{secforward} approach the fixed point (specifically, the gradient of a network trained on a sequence of length $T$ will depend on a matrix polynomial of order $T$ in $\mathbf{J}$), hence we desire to control the squared singular value distribution of this matrix. This is achieved by calculating the first two moments of the squared singular value distribution which are a function of the hyperparameters $\Theta$, as detailed in Appendix \ref{app_jac}. If one can find a setting of the hyperparameters such that the mean squared singular value of $\mathbf{J}$, which we denote by $m_{\mathbf{J}\mathbf{J}^{T},1}$, is close to $1$, and the variance of the squared singular values $\sigma_{\mathbf{J}\mathbf{J}^{T}}$ is small, then the spectrum of powers of this matrix will not diverge too rapidly and the gradients should remain stable.

\subsection{Dynamical Isometry}
Combining the insights of Sections \ref{secforward} and \ref{sec_jac}, we conclude that an effective choice of initialization hyperparamters should satisfy
\begin{subequations} \label{eqDI}
	\begin{align}
	\chi_{C_{s}^{\ast}}=1\\
	m_{\mathbf{J}\mathbf{J}^{T},1}=1 \\
	\sigma_{\mathbf{J}\mathbf{J}^{T}}=0.
	\end{align}
\end{subequations}
We refer to these as \textit{dynamical isometry} conditions. eqn.~(\ref{eqDI})a ensures stable signal propagation from the inputs to the loss, and eqn.~(\ref{eqDI})b-c  are motivated by the additional requirement of preventing the gradients from exploding/vanishing. Both of these conditions appear to be necessary in order for a network to be trainable. 

We demand that these equations are only satisfied approximately since for a given architecture, there may not be a value of $\Theta$ that satisfies them all. Additionally, even if such a value exists, the optimal value of $\chi_{C_{s}^{\ast}}$ for a given task may not be 1. There is some empirical evidence that if the characteristic time scale defined by $\chi_{C_{s}^{\ast}}$ is much larger than that required for a certain task, performance is degraded. 

The typical values of $\Sigma_z$ will depend on the data set, yet satisfying the dynamical isometry conditions is simplified if $\Sigma_z=1$ due to Lemma \ref{lem_chi_m_1}. It is thus a natural choice, yet we acknowledge that a more comprehensive treatment should consider the case of general $\Sigma_z$. We also find empirically that the results obtained under the $\Sigma_z=1$ assumption prove to be predictive and enable training on a number of tasks without requiring a detailed analysis of $\Sigma_z$.


In the case of the LSTM, evaluating the quantities in eqn.~(\ref{eqDI}) is complicated due to the additional cell state variable not present in simpler architectures. Appendix \ref{app_LSTM} details the methods used to perform the calculations in this case.
\section{Experiments}
\subsection{Padded MNIST Classification} \label{sec_padded}
The calculations presented above predict a characteristic time scale $\xi$ (defined in (\ref{eq_xi})) for forward signal propagation in a recurrent network. It follows that on a task where success depends on propagation of information from the first time step to the final $T$-th time step, the network will not be trainable for $T\gg\xi$. In order to test this prediction, we consider a classification task where the inputs are sequences consisting of a single MNIST digit followed by $T-1$ steps of i.i.d Gaussian noise and the targets are the digit labels. By scanning across certain directions in hyperparameter space, the predicted value of $\xi$ changes. We plot training accuracy of a network trained with untied weights after 1000 iterations for the GRU and 2000 for the LSTM, as a function of $T$ and the hyperparameter values, and overlay this with multiples of $\xi$. As seen in Figure \ref{fig_untied_heatmaps}, we observe good agreement between the predicted time scale of signal propagation and the success of training. As expected, there are some deviations when training without enforcing untied weights, and we present the corresponding plots in the supplementary materials. 
\begin{figure*}
	\begin{minipage}{\textwidth}
		\centering
		\includegraphics[width=0.9\textwidth]{\figspath/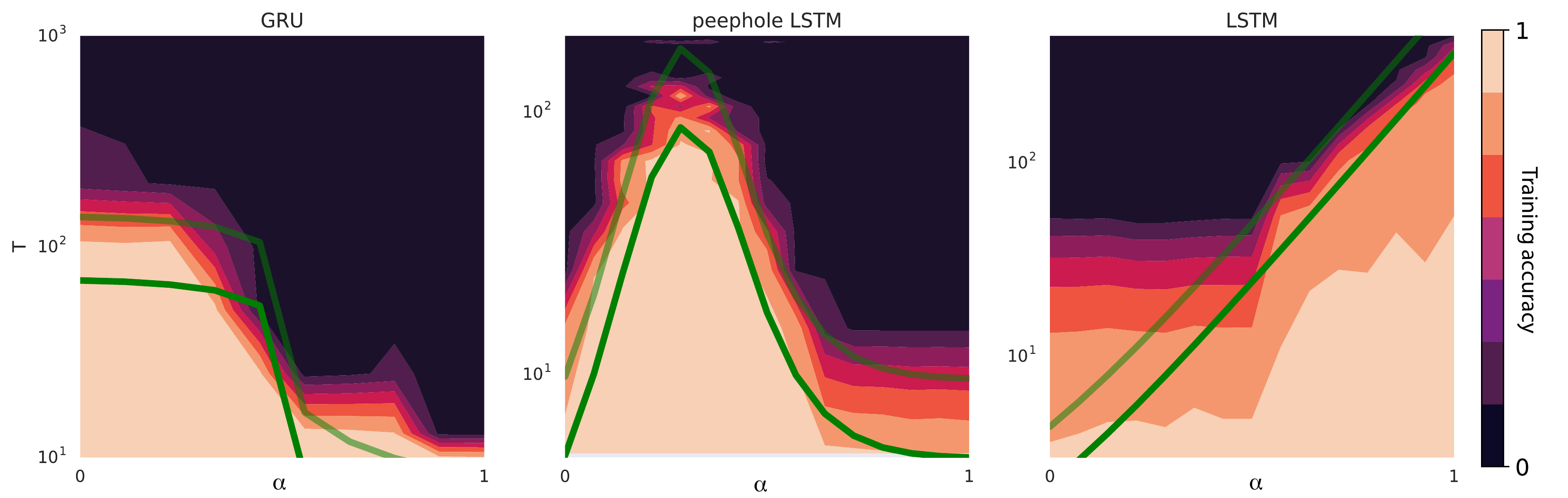}
	\end{minipage}
	\caption{Training accuracy on the padded MNIST classification task described in \ref{sec_padded} at different sequence lengths $T$ and hyperparameter values $\gV_0 + \ga \gV_1$ for networks with untied weights, with different values of $\gV_0, \gV_1$ chosen for each architecture. The dark and light green curves are respectively $3\gx, 6\gx$ where $\gx$ is the theoretical signal propagation time scale in eqn.~(\ref{eq_xi}). As can be seen, this time scale predicts the transition between the regions of high and low accuracy across the different architectures and directions in hyperparameter space.}
	\label{fig_untied_heatmaps}
		\vspace{-0.4cm}
\end{figure*}
\subsection{Squared Jacobian spectrum histograms}
\begin{figure}
	\begin{minipage}[c]{0.5\textwidth}
		\includegraphics[width=\textwidth]{\figspath/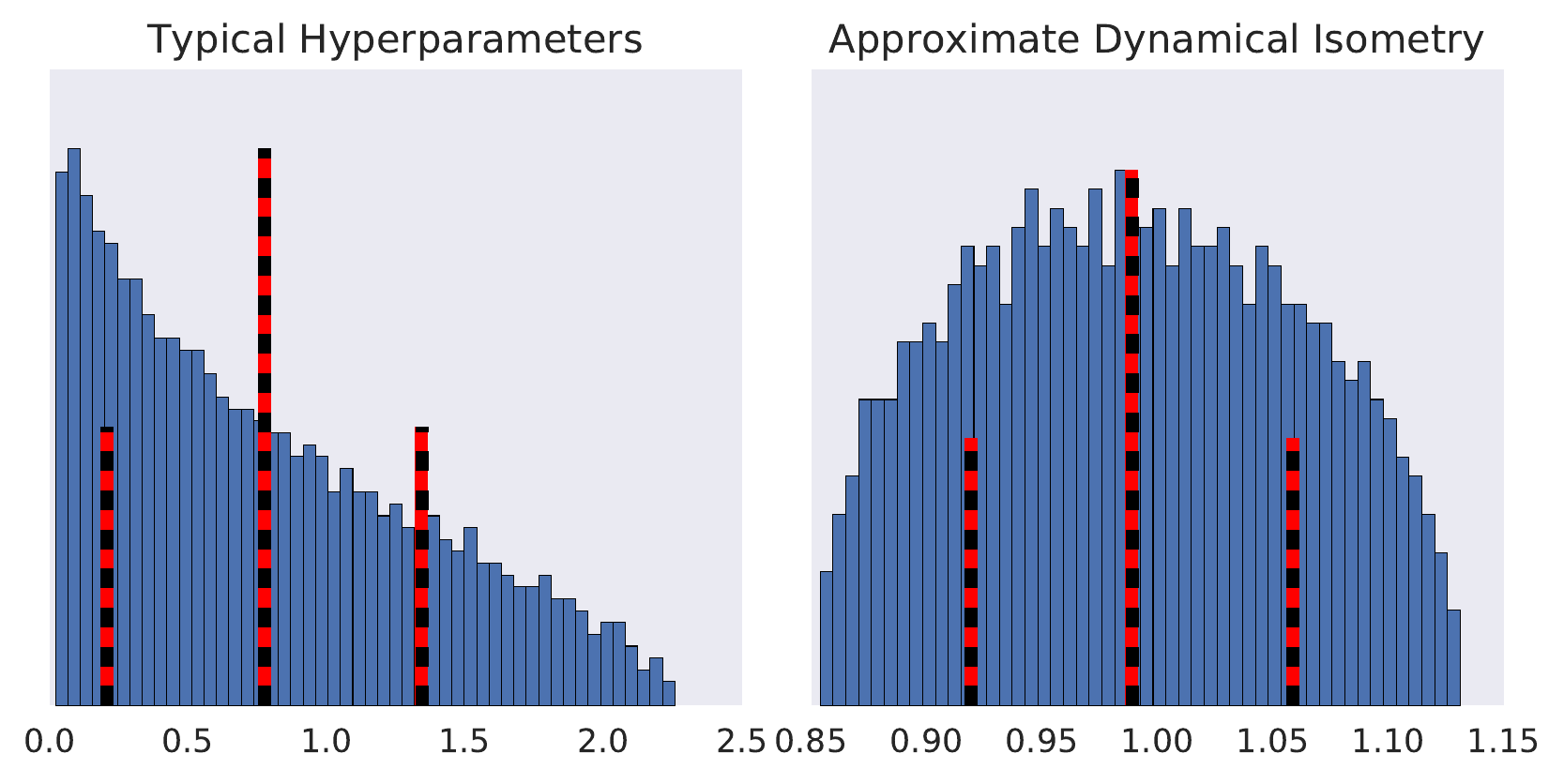}
	\end{minipage}\hfill
	\begin{minipage}[c]{0.53\textwidth}
		\caption{Squared singular values of the state-to-state Jacobian in eqn.~(\ref{eq_J_def}) for two choices of hyperparameter settings $\Theta$. The red lines denote the empirical mean and standard deviations, while the dotted lines denote the theoretical prediction based on the calculation described in Section~\ref{sec_jac}. Note the dramatic difference in the spectrum caused by choosing an initialization that approximately satisfies the dynamical isometry conditions. 
		}
		\label{fig_ssv}
	\end{minipage}
	\vspace{-0.8cm}
\end{figure}
To verify the results of the calculation of the moments of the squared singular value distribution of the state-to-state Jacobian presented in Section~\ref{sec_jac} we run an untied peephole LSTM for 100 iterations with i.i.d. Gaussian inputs. We then compute the state-to-state Jacobian and calculate its spectrum. This can be used to compare the first two moments of the spectrum to the result of the calculation, as well as to observe the difference between a standard initialization and one close to satisfying the dynamical isometry conditions. The results are shown in Figure \ref{fig_ssv}. The validity of this experiment rests on making an ergodicity assumption, since the calculated spectral properties require taking averages over realizations of random matrices, while in the experiment we instead calculate the moments by averaging over the eigenvalues of a single realization. The good agreement between the prediction and the empirical average suggests that the assumption is valid.
\begin{figure*}[h]
	\begin{minipage}{\textwidth}
		\centering{
			\includegraphics[width=0.8\textwidth]{\figspath/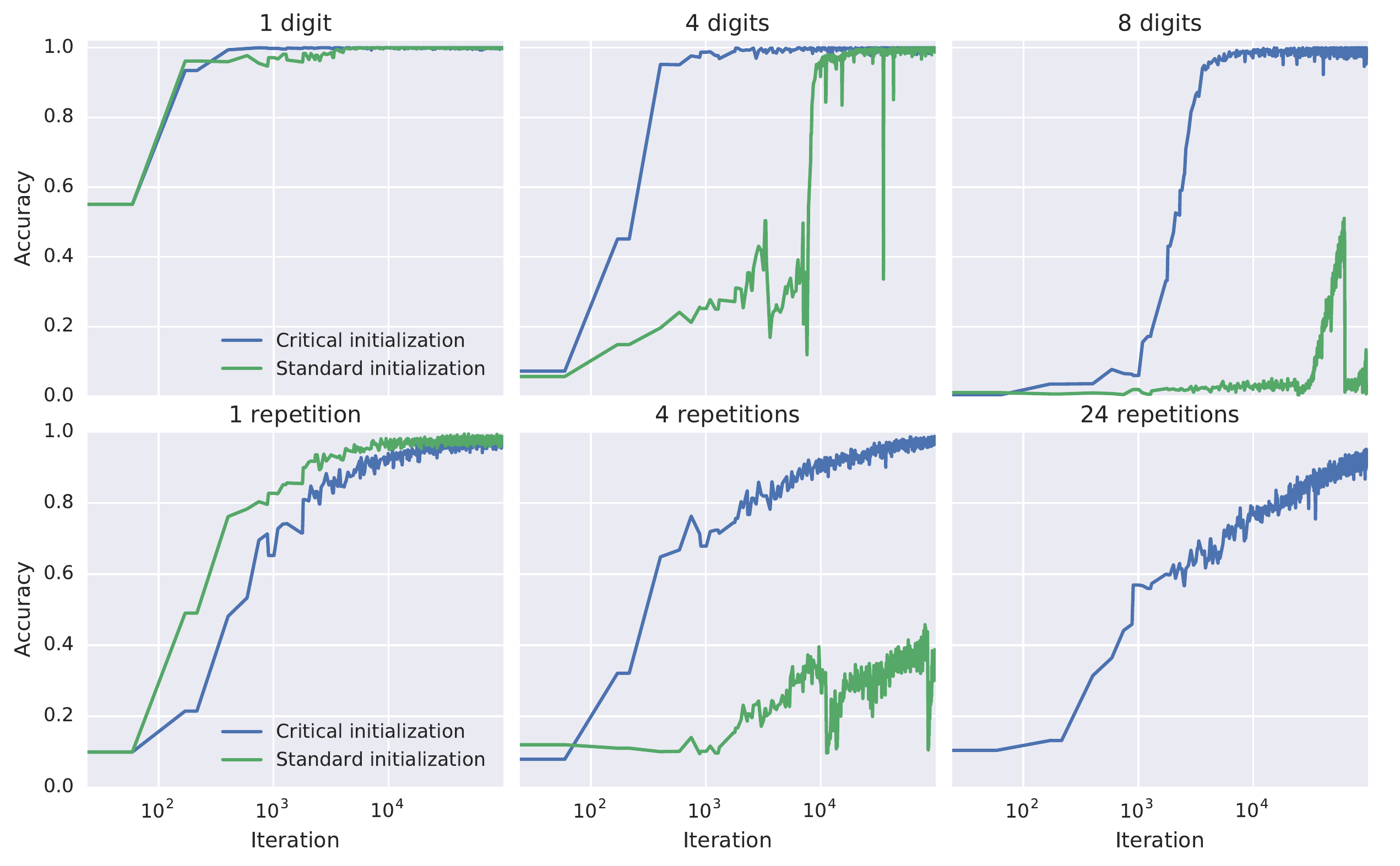}
		}
	\end{minipage}
	\caption{Training accuracy for unrolled, concatenated MNIST digits (\textit{top}) and unrolled MNIST digits with replicated pixels (\textit{bottom}) for different  sequence lengths. \textit{Left}: For shorter sequences the standard and critical initialization perform equivalently. \textit{Middle}: As the sequence length is increased, training with a critical initialization is faster by orders of magnitude. \textit{Right}: For very long sequence lengths, training with a standard initialization fails completely (and is unstable from initialization in the lower right panel).}
	\label{fig_reps_digs}
\end{figure*}
\subsection{Unrolled MNIST and CIFAR-10} \label{sec_unrolled}
\begin{wraptable}{r}{90mm}
	\begin{tabular}{c|c|c}
		& MNIST & CIFAR-10 \\
		\hline
		standard LSTM & 98.6 \tablefootnote{reproduced from \cite{arpit2018h}}	 & 58.8 \tablefootnote{reproduced from \cite{trinh2018learning}} \\
		\hline
		h-detach LSTM \cite{arpit2018h} & 98.8 & -\\
		\hline
		critical LSTM & \textbf{98.9} & \textbf{61.8}\\
	\end{tabular} 
	\caption{Test accuracy on unrolled MNIST and CIFAR-10.}
	\label{tab_unrolled}
\end{wraptable}
The calculation results motivate critical initializations that we test on standard long sequence benchmarks. The details of the initializations are presented in Appendix \ref{app_exp}. We unroll an MNIST digit into a sequence of length 784 and train a critically initialized peephole LSTM with 600 hidden units. We also train a critically initialized LSTM with hard sigmoid nonlinearities on unrolled CIFAR-10 images feeding in 3 pixels at every time step, resulting in sequences of length 1024. We also apply standard data augmentation for this task. We present accuracy on the test set in Table \ref{tab_unrolled}. Interestingly, in the case of CIFAR-10 the best performance is achieved by an initialization with a forward propagation time scale $\gx$ that is much smaller than the sequence length, suggesting that information sufficient for successful classification may be obtained from a subset of the sequence. 
\subsection{Repeated pixel MNIST and multiple digit MNIST}
In order to generate longer sequence tasks, we modify the unrolled MNIST task by repeating every pixel a certain number of times and set the input dimension to 7. To create a more challenging task, we also combine this pixel repetition with concatenation of multiple MNIST digits (either 0 or 1), and label such sequences by a product of the original labels. In this case, we set the input dimension to 112 and repeat each pixel 10 times. We train a peephole LSTM with both a critical initialization and a standard initialization on both of these tasks using SGD with momentum. In this former task, the dimension of the label space is constant (and not exponential in the number of digits like in the latter). In both tasks, we observe three distinct phases. If the sequence length is relatively short the critical and standard initialization perform equivalently. As the sequence length is increased, training with a critical initialization is faster by orders of magnitude compared to the standard initialization. As the sequence length is increased further, training with a standard initialization fails, while training with a critical initialization still succeeds. The results are shown in Figure \ref{fig_reps_digs}. 

\section{Discussion}
We have derived initialization schemes for recurrent networks motivated by ensuring stable signal propagation from the inputs to the loss and of gradient information from the loss to the weights. These schemes dramatically improve  performance on long sequence tasks.

The subspace of initialization hyperparameters $\gV$ that satisfy the dynamical isometry conditions is multidimensional, and there is no clear principled way to choose a preferred initialization within it. 
It would be of interest to study this subspace and perhaps identify preferred initializations based on additional constraints. One could also use the satisfaction of the dynamical isometry conditions as a guiding principle in simplifying these architectures. A direct consequence of the analysis is that the forget gate, for instance, is critical, while some of the other gates or weights matrices can be removed while still satisfying the conditions. A related question is the optimal choice of the forward propagation time scale $\gx$ for a given task. As mentioned in Section \ref{sec_unrolled}, this scale can be much shorter than the sequence length. It would also be valuable understand better the extent to which the untied weights assumption is violated, since it appears that the violation is non-uniform in $\gV$, and to relax the constant $\gS_z$ assumption by introducing a time dependence.


Another compelling issue is the persistence of the dynamical isometry conditions during training and their effect on the solution, for both feed-forward and recurrent architectures. Intriguingly, it has been recently shown that in the case of certain infinitely-wide MLPs, objects that are closely related to the correlations and moments of the Jacobian studied in this work are constant during training, and as a result the dynamics of learning take a simple form \cite{jacot2018neural}. 
Understanding the finite width and learning rate corrections to such calculations could help extend the analysis of signal propagation at initialization to trained networks. This has the potential to improve the understanding of neural network training dynamics, convergence and ultimately perhaps generalization as well. 

\bibliography{refs.bib}

\begin{thebibliography}{}

\bibitem[Arpit et~al., 2018]{arpit2018h}
Arpit, D., Kanuparthi, B., Kerg, G., Ke, N.~R., Mitliagkas, I., and Bengio, Y.
  (2018).
\newblock h-detach: Modifying the lstm gradient towards better optimization.
\newblock {\em arXiv preprint arXiv:1810.03023}.

\bibitem[Bengio et~al., 1994]{bengio1994learning}
Bengio, Y., Simard, P., and Frasconi, P. (1994).
\newblock Learning long-term dependencies with gradient descent is difficult.
\newblock {\em IEEE transactions on neural networks}, 5(2):157--166.

\bibitem[Chang et~al., 2019]{chang2018antisymmetricrnn}
Chang, B., Chen, M., Haber, E., and Chi, E.~H. (2019).
\newblock Antisymmetric{RNN}: A dynamical system view on recurrent neural
  networks.
\newblock In {\em International Conference on Learning Representations}.

\bibitem[Chen et~al., 2018]{chen2018dynamical}
Chen, M., Pennington, J., and Schoenholz, S.~S. (2018).
\newblock Dynamical isometry and a mean field theory of rnns: Gating enables
  signal propagation in recurrent neural networks.
\newblock {\em arXiv preprint arXiv:1806.05394}.

\bibitem[Cho et~al., 2014]{cho2014learning}
Cho, K., van Merrienboer, B., Gulcehre, C., Bahdanau, D., Bougares, F.,
  Schwenk, H., and Bengio, Y. (2014).
\newblock Learning phrase representations using rnn encoder--decoder for
  statistical machine translation.
\newblock In {\em Proceedings of the 2014 Conference on Empirical Methods in
  Natural Language Processing (EMNLP)}, pages 1724--1734.

\bibitem[Gers et~al., 2002]{gers2002learning}
Gers, F.~A., Schraudolph, N.~N., and Schmidhuber, J. (2002).
\newblock Learning precise timing with lstm recurrent networks.
\newblock {\em Journal of machine learning research}, 3(Aug):115--143.

\bibitem[Glorot and Bengio, 2010]{glorot2010understanding}
Glorot, X. and Bengio, Y. (2010).
\newblock Understanding the difficulty of training deep feedforward neural
  networks.
\newblock In {\em Proceedings of the thirteenth international conference on
  artificial intelligence and statistics}, pages 249--256.

\bibitem[Goldie, 1991]{goldie1991implicit}
Goldie, C.~M. (1991).
\newblock Implicit renewal theory and tails of solutions of random equations.
\newblock {\em The Annals of Applied Probability}, pages 126--166.

\bibitem[Hochreiter and Schmidhuber, 1997]{hochreiter1997long}
Hochreiter, S. and Schmidhuber, J. (1997).
\newblock Long short-term memory.
\newblock {\em Neural computation}, 9(8):1735--1780.

\bibitem[Jacot et~al., 2018]{jacot2018neural}
Jacot, A., Gabriel, F., and Hongler, C. (2018).
\newblock Neural tangent kernel: Convergence and generalization in neural
  networks.
\newblock {\em arXiv preprint arXiv:1806.07572}.

\bibitem[Jose et~al., 2017]{jose2017kronecker}
Jose, C., Cisse, M., and Fleuret, F. (2017).
\newblock Kronecker recurrent units.
\newblock {\em arXiv preprint arXiv:1705.10142}.

\bibitem[Lee et~al., 2019]{lee2019wide}
Lee, J., Xiao, L., Schoenholz, S.~S., Bahri, Y., Sohl-Dickstein, J., and
  Pennington, J. (2019).
\newblock Wide neural networks of any depth evolve as linear models under
  gradient descent.
\newblock {\em arXiv preprint arXiv:1902.06720}.

\bibitem[Matthews et~al., 2018]{matthews2018gaussian}
Matthews, A. G. d.~G., Rowland, M., Hron, J., Turner, R.~E., and Ghahramani, Z.
  (2018).
\newblock Gaussian process behaviour in wide deep neural networks.
\newblock {\em arXiv preprint arXiv:1804.11271}.

\bibitem[Pascanu et~al., 2013]{pascanu2013difficulty}
Pascanu, R., Mikolov, T., and Bengio, Y. (2013).
\newblock On the difficulty of training recurrent neural networks.
\newblock In {\em International Conference on Machine Learning}, pages
  1310--1318.

\bibitem[Pennington et~al., 2017]{pennington2017resurrecting}
Pennington, J., Schoenholz, S., and Ganguli, S. (2017).
\newblock Resurrecting the sigmoid in deep learning through dynamical isometry:
  theory and practice.
\newblock In {\em Advances in neural information processing systems}, pages
  4785--4795.

\bibitem[Pennington et~al., 2018]{pennington2018emergence}
Pennington, J., Schoenholz, S.~S., and Ganguli, S. (2018).
\newblock The emergence of spectral universality in deep networks.
\newblock {\em arXiv preprint arXiv:1802.09979}.

\bibitem[Poole et~al., 2016]{poole2016exponential}
Poole, B., Lahiri, S., Raghu, M., Sohl-Dickstein, J., and Ganguli, S. (2016).
\newblock Exponential expressivity in deep neural networks through transient
  chaos.
\newblock In {\em Advances in neural information processing systems}, pages
  3360--3368.

\bibitem[Saxe et~al., 2013]{saxe2013exact}
Saxe, A.~M., McClelland, J.~L., and Ganguli, S. (2013).
\newblock Exact solutions to the nonlinear dynamics of learning in deep linear
  neural networks.
\newblock {\em arXiv preprint arXiv:1312.6120}.

\bibitem[Schoenholz et~al., 2016]{schoenholz2016deep}
Schoenholz, S.~S., Gilmer, J., Ganguli, S., and Sohl-Dickstein, J. (2016).
\newblock Deep information propagation.
\newblock {\em arXiv preprint arXiv:1611.01232}.

\bibitem[Trinh et~al., 2018]{trinh2018learning}
Trinh, T.~H., Dai, A.~M., Luong, T., and Le, Q.~V. (2018).
\newblock Learning longer-term dependencies in rnns with auxiliary losses.
\newblock {\em arXiv preprint arXiv:1803.00144}.

\bibitem[Vorontsov et~al., 2017]{vorontsov2017orthogonality}
Vorontsov, E., Trabelsi, C., Kadoury, S., and Pal, C. (2017).
\newblock On orthogonality and learning recurrent networks with long term
  dependencies.
\newblock In {\em International Conference on Machine Learning}, pages
  3570--3578.

\bibitem[Wisdom et~al., 2016]{wisdom2016full}
Wisdom, S., Powers, T., Hershey, J., Le~Roux, J., and Atlas, L. (2016).
\newblock Full-capacity unitary recurrent neural networks.
\newblock In {\em Advances in Neural Information Processing Systems}, pages
  4880--4888.

\bibitem[Xiao et~al., 2018]{xiao2018dynamical}
Xiao, L., Bahri, Y., Sohl-Dickstein, J., Schoenholz, S.~S., and Pennington, J.
  (2018).
\newblock Dynamical isometry and a mean field theory of cnns: How to train
  10,000-layer vanilla convolutional neural networks.
\newblock {\em arXiv preprint arXiv:1806.05393}.

\bibitem[Yang, 2019]{yang2019scaling}
Yang, G. (2019).
\newblock Scaling limits of wide neural networks with weight sharing: Gaussian
  process behavior, gradient independence, and neural tangent kernel
  derivation.
\newblock {\em arXiv preprint arXiv:1902.04760}.

\bibitem[Yang et~al., 2019]{yang_mean_2018}
Yang, G., Pennington, J., Rao, V., Sohl-Dickstein, J., and Schoenholz, S.~S.
  (2019).
\newblock A {Mean} {Field} {Theory} of {Batch} {Normalization}.
\newblock In {\em International Conference on Learning Representations}.

\bibitem[Yang and Schoenholz, 2017]{yang2017mean}
Yang, G. and Schoenholz, S. (2017).
\newblock Mean field residual networks: On the edge of chaos.
\newblock In {\em Advances in neural information processing systems}, pages
  7103--7114.

\bibitem[Yang and Schoenholz, 2018]{yang2018deep}
Yang, G. and Schoenholz, S.~S. (2018).
\newblock Deep {Mean} {Field} {Theory}: {Layerwise} {Variance} and {Width}
  {Variation} as {Methods} to {Control} {Gradient} {Explosion}.
\newblock {\em ICLR Workshop}.
\newblock 00000.

\end{thebibliography}
\bibliographystyle{apalike}
\vfill
\pagebreak
\onecolumn

\begin{appendices}  \large{\textbf{Appendices}}
	\section{Details of Results} \label{app_details}
	\subsection{Covariances of $\bu_{k_2}^t$}
	The variables $u_{k_2ia}^{t}, u_{k_2ib}^{t}$ given by \ref{eq_s_system}c are  asymptotically Gaussian at the $N \rightarrow \infty$, with
	\begin{subequations}
		\begin{gather}
		Q_{k_2}^{t}=\sigma_{k_2}^{2}\int g_k^{2}(u_{a})\mathcal{D}_{k}Q_{s}^{t}+\nu_{k_2}^{2}R+\rho_{k_2}^{2}+\mu_{k_2}^{2}\\
		C_{k_{2}}^{t}=\frac{\left(\begin{array}{c}
			\sigma_{k_{2}}^{2}\int g_{k}(u_{a})g_{k}(u_{a})\mathcal{D}_{k}\left(\gS_{s}^{t2}C_{s}^{t}+\mu_{s}^{2}\right)\\
			+\nu_{k_{2}}^{2}R\Sigma_{z}+\rho_{k_{2}}^{2}
			\end{array}\right)}{Q_{k_{2}}^{t}-\mu_{k_{2}}^{2}}
		\end{gather}
	\end{subequations}
	where $\mathcal{D}_{k}$ is a Gaussian measure on $(u_a,u_b)$ corresponding to the distribution in eqn.~(\ref{eq_v_gauss}). 
	\subsection{Moments of the state-to-state Jacobian and general form of the dynamical isometry conditions} \label{app_jac}
	The moments of the squared singular value distribution are given by the normalized traces
	\[
	m_{\mathbf{J}\mathbf{J}^{T},n}=\tau((\mathbf{J}\mathbf{J}^{T})^{n}).
	\]
	Since $U^{t'},t'<t$ is independent of $\mathbf{s}^t$, if we index by $k,k'$ the variables defined by eqn.~(\ref{eq_s_system}b) and (\ref{eq_s_system}c) respectively we obtain
	\begin{equation} \label{eq_J_def}
	\mathbf{J}=\frac{\partial\mathbf{f}}{\partial\mathbf{s}^{\ast}}+\underset{k}{\sum}\frac{\partial\mathbf{f}}{\partial\mathbf{u}_{k}^{\ast}}\mathbf{W}_{k}+\underset{k'}{\sum}\frac{\partial\mathbf{f}}{\partial\mathbf{u}_{k'_{2}}^{\ast}}\mathbf{W}_{k'_{2}}\left(\mathbf{D}_{\mathbf{g}_{k}(\mathbf{u}_{k'}^{\ast})}+\mathbf{W}_{k'}\mathbf{D}_{\mathbf{\mathbf{g}_{k}}'(\mathbf{u}_{k'}^{\ast})}\mathbf{D}_{\mathbf{s}^{\ast}}\right).
	\end{equation}
	Under the untied assumption $\mathbf{W}_{k},\mathbf{W}_{k_2}$ are independent of $\mathbf{s}^{t},\mathbf{u}_k^{t}$ at the large $N$ limit, and are also independent of each other and their elements have mean zero. Using this and the fact that $\mathbf{f}$ acts element-wise, we have
	\begin{equation} \label{eq_m_1_def}
	m_{\mathbf{J}\mathbf{J}^{T},1}=\tau(\mathbf{J}\mathbf{J}^{T})=\mathbb{E}\underset{k\in K\cup\{0\}}{\sum}a_{k}
	\end{equation}
	where 
	\begin{equation} \label{eq_a}
	a_{k}=\begin{cases}
	D_{0}^2 & k=0\\
	\sigma_{k}^{2}D_{k}^{2} & \mathbf{u}_{k}\text{ is given by (\ref{eq_s_system}b)}\\
	\sigma_{k_2}^{2}D_{k}^{2}\left(\begin{array}{c}
	g_{k}^{2}(u_{k}^{\ast})\\
	+\sigma_{k}^{2}s^{\ast2}g_{k}'^{2}(u_{k}^{\ast})
	\end{array}\right) & \mathbf{u}_{k}\text{ is given by (\ref{eq_s_system}c)}
	\end{cases}
	\end{equation}
	and $D_{0}=\frac{\partial f}{\partial s^{\ast}},D_{k}=\frac{\partial f}{\partial u_{k}^{\ast}}$. The values of $D_0,D_k$ for the architectures considered in the paper are detailed in Section~\ref{app_init_details}.
	Forward and backward signal propagation are in fact intimately related, as the following lemma shows:
	\begin{lemma} \label{lem_chi_m_1}
		For a recurrent neural networks defined by (\ref{eq_s_system}), the mean squared singular value of the state-to-state Jacobian defined in (\ref{eq_m_1_def}) and $\chi_{C_{s}}$ that determines the time scale of forward signal propagation (given by (\ref{eq_chi_def})) are related by 
		\begin{equation} \label{eq_m_eq_chi}
		m_{\mathbf{J}\mathbf{J}^{T},1}=\chi_{C_{s}^{\ast}=1,\Sigma_z=1}
		\end{equation}
	\end{lemma}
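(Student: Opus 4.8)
The plan is to compute both sides and recognize them as the same sum of expectations. Write the update rule (\ref{eq_s_system}a) as $f(\mathbf{s}^{t-1},\{\mathbf{u}^t_k\},\mathbf{z}^t)=D_0\,\mathbf{s}^{t-1}+\mathbf{h}$, where $D_0=\partial f/\partial s^{\ast}$ and $\mathbf{h}$ depend only on the time-$t$ pre-activations and on $\mathbf{z}^t$; this is legitimate since $f$ is affine in $\mathbf{s}^{t-1}$. First I would check that once $\Sigma_z=1$ the value $C^{\ast}_s=1$ is a consistent fixed point of (\ref{eq_Mc}): substituting $C_s=1,\ \Sigma_z=1$ into (\ref{eq_QC_k_def}b) and into the covariance formulas for $\mathbf{u}^t_{k_2}$ in Appendix~\ref{app_details}, and using $\Sigma_s^{\ast 2}+\mu_s^{2}=Q^{\ast}_s$, gives $C^{\ast}_k=C^{\ast}_{k_2}=1$ for every pre-activation. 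Hence at this fixed point the two copies of the network are almost surely equal, and this is what collapses cross-correlations into second moments.

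By (\ref{eq_M0}c) and (\ref{eq_chi_def}), with $\mu_s,Q_s$ frozen at their fixed-point values, $\chi_{C^{\ast}_s}=\frac{1}{\Sigma_s^{\ast 2}}\,\frac{d}{dC^{t-1}_s}\mathbb{E}[s^t_a s^t_b]$ at the fixed point. I would insert the affine decomposition and use two mean-field facts: (i) $\mathbf{s}^{t-1}$ is independent, coordinate-wise, of every time-$t$ pre-activation of both copies, so $\mathbb{E}[s^t_a s^t_b]$ multiplies out into a combination of the moments $\mathbb{E}[s^{t-1}_a s^{t-1}_b],\mathbb{E}[s^{t-1}_a],\mathbb{E}[s^{t-1}_b]$ and of expectations of functions of the jointly Gaussian time-$t$ pre-activations; and (ii) because the weights are untied, independent and mean zero, the only second moments that vary with $C^{t-1}_s$ are $\mathrm{Cov}(s^{t-1}_a,s^{t-1}_b)=\Sigma_s^{\ast 2}C^{t-1}_s$ and the cross-copy variances $\mathrm{Cov}(u^t_{\kappa a},u^t_{\kappa b})=\Sigma^{\ast 2}_\kappa C^t_\kappa$, one for each pre-activation type $\kappa$ (all within-copy and all cross-type covariances being zero or frozen). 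Differentiating the first dependence through the explicit $\mathbb{E}[s^{t-1}_a s^{t-1}_b]$ factor, and the rest by Gaussian integration by parts (Price's theorem) applied to the pre-activations, and then setting $C^{\ast}_s=1$ so the two copies coincide, yields
\[
\chi_{C^{\ast}_s=1,\Sigma_z=1}=\mathbb{E}[D_0^2]+\frac{1}{\Sigma_s^{\ast 2}}\sum_{\kappa}\Big(\Sigma^{\ast 2}_\kappa\,\frac{dC^t_\kappa}{dC^{t-1}_s}\Big)\,\mathbb{E}\big[(\partial f/\partial u^{\ast}_\kappa)^2\big],
\]
where the coefficient of each $dC^t_\kappa/dC^{t-1}_s$ has been reassembled, using $\partial_{u_\kappa}f=(\partial_{u_\kappa}D_0)\,s^{t-1}+\partial_{u_\kappa}\mathbf{h}$ together with the independence of $s^{\ast}$ from the pre-activations (so that $\mathbb{E}[(s^{\ast})^2]=Q^{\ast}_s$ is absorbed), into $\mathbb{E}[(\partial f/\partial u^{\ast}_\kappa)^2]=\mathbb{E}[D_\kappa^2]$; the leading term $\mathbb{E}[D_0^2]$ is precisely $\mathbb{E}[a_0]$ in the notation of (\ref{eq_a}).

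It remains to evaluate $dC^t_\kappa/dC^{t-1}_s$ at the fixed point. For a pre-activation of type (\ref{eq_s_system}b), (\ref{eq_QC_k_def}b) gives $\Sigma^{\ast 2}_k\,dC^t_k/dC^{t-1}_s=\sigma_k^2\Sigma_s^{\ast 2}$, so that term contributes $\sigma_k^2\,\mathbb{E}[D_k^2]=\mathbb{E}[a_k]$. For a pre-activation $u^t_{k_2}$ of type (\ref{eq_s_system}c) the dependence on $C^{t-1}_s$ is compound: the numerator of $C^t_{k_2}$ (Appendix~\ref{app_details}) depends on $C^{t-1}_s$ both directly, through $\Sigma_s^{\ast 2}C^{t-1}_s+\mu_s^2$, and through $C^t_k$ inside $\int g_k(u_a)g_k(u_b)\,\mathcal{D}_k$. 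Applying Price's theorem once more to that integral, $\tfrac{\partial}{\partial C^t_k}\!\int g_k(u_a)g_k(u_b)\mathcal{D}_k=\Sigma^{\ast 2}_k\!\int g'_k(u_a)g'_k(u_b)\mathcal{D}_k$, which at $C^{\ast}_k=1$ equals $\Sigma^{\ast 2}_k\,\mathbb{E}[g'_k(u^{\ast}_k)^2]$, I get $\Sigma^{\ast 2}_{k_2}\,dC^t_{k_2}/dC^{t-1}_s=\sigma_{k_2}^2\Sigma_s^{\ast 2}\big(\mathbb{E}[g_k(u^{\ast}_k)^2]+\sigma_k^2Q^{\ast}_s\,\mathbb{E}[g'_k(u^{\ast}_k)^2]\big)$. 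Since at the mean-field level $u^t_{k_2}$ and $u^t_k$ are jointly Gaussian and uncorrelated, hence independent, and $s^{\ast}$ is independent of the pre-activations, this contributes $\sigma_{k_2}^2\,\mathbb{E}[D_{k_2}^2g_k(u^{\ast}_k)^2]+\sigma_{k_2}^2\sigma_k^2\,\mathbb{E}[D_{k_2}^2g'_k(u^{\ast}_k)^2(s^{\ast})^2]=\mathbb{E}[a_{k_2}]$, exactly the type-(\ref{eq_s_system}c) case of (\ref{eq_a}). Summing over $\kappa\in K\cup\{0\}$ reproduces $\mathbb{E}\sum_k a_k=m_{\mathbf{J}\mathbf{J}^T,1}$ from (\ref{eq_m_1_def}), which is the claim.

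The main obstacle is the bookkeeping in the two middle steps: pinning down exactly which entries of the (large) joint covariance of the two copies move with $C^{t-1}_s$, so that Price's theorem can be invoked without extra terms, and handling the compound $C^{t-1}_s$-dependence of $C^t_{k_2}$, which is precisely what generates the two-term structure of $a_{k_2}$ in (\ref{eq_a}). A secondary subtlety is the LSTM, where (\ref{eq_s_system}a) in fact involves pre-activations from all earlier times through the unrolled cell state; one must verify (as done in Appendix~\ref{app_LSTM}) that the correlations of those earlier pre-activations are frozen at the fixed point, so they do not contribute to $d/dC^{t-1}_s$, after which the same computation applies.
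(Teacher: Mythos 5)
Your proposal is correct and follows essentially the same route as the paper's own proof: reduce to a single coordinate by the mean-field i.i.d.\ structure, differentiate $\mathcal{M}_C$ through the chain $C_s \to C_k \to C_{k_2}$ using Gaussian integration by parts (what you call Price's theorem; the paper derives the same identity via a Cholesky parametrization), evaluate at $C_s^{\ast}=1,\ \Sigma_z=1$ where all $C_k^{\ast}=1$, and match the resulting terms to the $a_k$ in (\ref{eq_a}). The only cosmetic differences are your explicit affine decomposition $f = D_0 s^{t-1} + \mathbf{h}$ and your remark about the LSTM's earlier-time pre-activations, both of which the paper leaves implicit.
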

	\begin{proof}
		See Appendix~\ref{lem_chi_m_1p}. 
	\end{proof}
	Controlling the first moment of $\mathbf{J}\mathbf{J}^{T}$ is not sufficient to ensure that the gradients do not explode or vanish, since the variance of the singular values may still be large. This variance is given by 
	\[
	\sigma_{\mathbf{J}\mathbf{J}^{T}}=m_{\mathbf{J}\mathbf{J}^{T},2}-m_{\mathbf{J}\mathbf{J}^{T},1}^{2}.
	\] The second moment $m_{\mathbf{J}\mathbf{J}^{T},2}$ can be calculated from (\ref{eq_J_def}), and is given by 
	\begin{equation} \label{eq_m}
	m_{\mathbf{J}\mathbf{J}^{T},2}=\mathbb{E}\underset{k,l\in K\cup\{0\}}{\sum}2a_{k}a_{l}-a_{0}^{2}
	\end{equation}
	where the $a_k$ are defined in eqn.~(\ref{eq_a}). One could compute higher moments as well either explicitly or using tools from non-Hermitian random matrix theory.

	For all the architectures considered in this paper, we find that $D_0=\gs(u_f^\ast)$ while $D_k$ are finite as $\forall k: \gs^2_k \rightarrow 0$. Combining this with (\ref{eq_m_1_def}), (\ref{eq_m_eq_chi}), (\ref{eq_m}), we find that if $\gS_z=1$ the dynamical isometry conditions are satisfied if $a_0=1,a_{k\neq 0}=0$, which can be achieved by setting $\forall k: \gs^2_k=0$ and taking $\mu_f \rightarrow \infty$. This motivates the general form of the initializations used in the experiments \footnote{In the case of the LSTM, we also want to prevent the output gate from taking very small values, as explained in Section \ref{app_init_details}. } although there are many other possible choices of $\gV$ such that the $a_{k\neq 0}$ vanish.
	
	Given the above general form of the dynamical isometry conditions for recurrent networks, we now provide the detailed forms that apply to the LSTM, peephole LSTM and GRU.
	
	\subsection{Dynamical isometry conditions for selected architectures} \label{app_init_details}
	We specify the form of $\chi_{C_s^{*},\Sigma}$ and $\ba$ for the architectures considered in this paper: 
	\subsubsection{GRU}
	\[
	\chi_{C_{s}^{*},\Sigma}=\mathbb{E}\left[\begin{array}{c}
	\sigma(u_{fa}^{\ast})\sigma(u_{fb}^{\ast})+\sigma_{f}^{2}\left(\begin{array}{c}
	\tanh(u_{r_{2}a}^{\ast})\tanh(u_{r_{2}b}^{\ast})\\
	+h_{a}^{\ast}h_{b}^{\ast}
	\end{array}\right)\sigma'(u_{fa}^{\ast})\sigma'(u_{fb}^{\ast})\\
	+\sigma_{r_{2}}^{2}(1-\sigma(u_{fa}^{\ast}))(1-\sigma(u_{fb}^{\ast}))\tanh'(u_{r_{2}a}^{\ast})\tanh'(u_{r_{2}b}^{\ast})\left(\begin{array}{c}
	\sigma(u_{r_{1}a}^{\ast})\sigma(u_{r_{1}b}^{\ast})\\
	+\sigma_{r_{1}}^{2}h_{a}^{\ast}h_{b}^{\ast}\sigma'(u_{r_{1}a}^{\ast})\sigma'(u_{r_{1}b}^{\ast})
	\end{array}\right)
	\end{array}\right]
	\] 
	\[
	\mathbf{a}=\left(\begin{array}{c}
	\sigma^{2}(u_{f}^{\ast})\\
	\sigma_{f}^{2}\left(\tanh^{2}(u_{r_{2}}^{\ast})+Q_{h}^{\ast}\right)\sigma'^{2}(u_{f}^{\ast})\\
	\sigma_{r_{1}}^{2}\sigma_{r_{2}}^{2}h^{\ast2}(1-\sigma(u_{f}^{\ast}))^{2}\tanh'^{2}(u_{r_{2}}^{\ast})\sigma'^{2}(u_{r_{1}}^{\ast})\\
	\sigma_{r_{2}}^{2}(1-\sigma(u_{f}^{\ast}))^{2}\tanh'^{2}(u_{r_{2}}^{\ast})\sigma^{2}(u_{r_{1}}^{\ast})
	\end{array}\right)
	\]
	\subsubsection{peephole LSTM}
	\[
	\chi_{C_{s}^{*},\Sigma}=\mathbb{E}\left[\begin{array}{c}
	\sigma(u_{fa}^{\ast})\sigma(u_{fb}^{\ast})+\sigma_{i}^{2}\sigma'(u_{ia}^{\ast})\sigma'(u_{ib}^{\ast})\tanh(u_{ra}^{\ast})\tanh(u_{rb}^{\ast})\\
	\sigma_{f}^{2}c_{a}^{\ast}c_{b}^{\ast}\sigma'(u_{fa}^{\ast})\sigma'(u_{fb}^{\ast})+\sigma_{r}^{2}\sigma(u_{ia}^{\ast})\sigma(u_{ib}^{\ast})\tanh'(u_{ra}^{\ast})\tanh'(u_{rb}^{\ast})
	\end{array}\right]
	\]
	\[
	\mathbf{a}=\left(\begin{array}{c}
	\sigma^{2}(u_{f}^{\ast})\\
	\sigma_{i}^{2}\sigma'^{2}(u_{i}^{\ast})\tanh^{2}(u_{r}^{\ast})\\
	\sigma_{f}^{2}c^{\ast2}\sigma'^{2}(u_{f}^{\ast})\\
	\sigma_{r}^{2}\sigma{}^{2}(u_{i}^{\ast})\tanh'^{2}(u_{r}^{\ast})
	\end{array}\right)
	\]
	\subsubsection{LSTM} 
	\[
	\chi_{C_{s}^{\ast},\Sigma_{z}}=\mathbb{E}\left[\begin{array}{c}
	\sigma(u_{fa}^{\ast})\sigma(u_{fb}^{\ast})+\sigma_{o}^{2}\sigma'(u_{oa}^{\ast})\sigma'(u_{ob}^{\ast})\tanh(c_{a}^{\ast})\tanh(c_{b}^{\ast})\\
	+\sigma(u_{oa}^{\ast})\sigma(u_{ob}^{\ast})\tanh'(c_{a}^{\ast})\tanh'(c_{b}^{\ast})\left(\begin{array}{c}
	\sigma_{f}^{2}c_{a}^{\ast}c_{b}^{\ast}\sigma'(u_{fa}^{\ast})\sigma'(u_{fb}^{\ast})\\
	+\sigma_{i}^{2}\sigma'(u_{ia}^{\ast})\sigma'(u_{ia}^{\ast})\tanh(u_{ra}^{\ast})\tanh(u_{rb}^{\ast})\\
	+\sigma_{r}^{2}\sigma(u_{ia}^{\ast})\sigma(u_{ia}^{\ast})\tanh'(u_{ra}^{\ast})\tanh'(u_{rb}^{\ast})
	\end{array}\right)
	\end{array}\right]
	\]
	\[
	\mathbf{a}=\left(\begin{array}{c}
	\sigma^{2}(u^\ast_{f})\\
	\sigma_{i}^{2}\sigma^{2}(u_{o}^{\ast})\tanh'^{2}(c^{\ast})\sigma'^{2}(u_{i}^{\ast})\tanh^{2}(u_{r}^{\ast})\\
	\sigma_{f}^{2}\sigma^{2}(u_{o}^{\ast})\tanh'^{2}(c^{\ast})c^{\ast2}\sigma'^{2}(u_{f}^{\ast})\\
	\sigma_{r}^{2}\sigma^{2}(u_{o}^{\ast})\tanh'^{2}(c^{\ast})\sigma{}^{2}(u_{i}^{\ast})\tanh'^{2}(u_{r}^{\ast})\\
	\sigma_{o}^{2}\sigma'^{2}(u_{o}^{\ast})\tanh^{2}(c^{\ast})
	\end{array}\right)
	\]
	When evaluating \ref{eq_chi_def} in this case, we write the cell state as $c^{t}=\tanh^{-1}\left(\frac{s^{t}}{\sigma(o^{t})}\right)$, and assume $\frac{\sigma(o^{t})}{\sigma(o^{t-1})}\approx1$ for large $t$. The stability of the first equation and the accuracy of the second approximation are improved if $o^t$ is not concentrated around 0.
	\subsection{RNN Table supplement}
	\begin{table*}[h]
		\begin{minipage}{\linewidth} 
			\setlength{\tabcolsep}{0.1em}
			\begin{tabular}{ c || c | c | c | c | c } 
				& 
				\begin{tabular}{@{}c@{}}Minimal RNN \\ \cite{chen2018dynamical} \end{tabular}    & \begin{tabular}{@{}c@{}}peephole LSTM \\ \cite{gers2002learning} \end{tabular}   \\
				\hline			
				$K$ & $\{f,r\}$ & $\{i,f,r,o\}$  \\
				$\bp^t$ & $\bs^t$ & $\gs(\bu_o^t)\circ \tanh(\bs^t)$  \\
				\hline
				$\bff$ & $\begin{array}{c}
				\sigma(\mathbf{u}_{f}^{t})\circ\mathbf{s}^{t-1}\\
				+(\mathbf{1}-\sigma(\mathbf{u}_{f}^{t}))\circ\mathbf{x}^{t}
				\end{array}$  & $\begin{array}{c}
				\sigma(\mathbf{u}_{f}^{t})\circ\mathbf{s}^{t-1}\\
				+\sigma(\mathbf{u}_{i}^{t})\circ\tanh(\mathbf{u}_{r}^{t})
				\end{array}$  \\
			\end{tabular}   
			\caption{Additional recurrent architectures written in the form \ref{eq_s_system}. $\bp^t$ is the output of the network at every time step. See Table \ref{tab_rnns} for more details.}
			\label{tab_rnns2}
		\end{minipage} 
	\end{table*}
	\section{Auxiliary Lemmas and Proofs} \label{app_aux}
	
	\begin{proof}[\bf{Proof of Lemma \ref{lem_chi_m_1}}] \label{lem_chi_m_1p} Despite the fact that each $u_{kai}$ as defined in \ref{eq_s_system} depends in principle upon the entire state vector $\mathbf{s}_a$, at the large N limit due to the isotropy of the input distribution we find that these random variables are i.i.d. and independent of the state. Combining this with the fact that $\mathbf{f}$ is an element-wise function, it suffices to analyse a single entry of $\mathbf{s}^{t}= \mathbf{f}(\mathbf{s}^{t-1},\{\mathbf{u}_{k}^1\},...,\{\mathbf{u}_{k}^t\})$, which at the large $t$ limit gives
		
		\[
		\mathcal{M}_{C}(\mu_{s}^{\ast},Q_{s}^{\ast},C_{s})=\frac{\mathbb{E}\left[f(s_{a},U_{a}^{\ast})f(s_{b},U_{b}^{\ast})\right]-(\mu_{s}^{\ast})^{2}}{Q_{s}^{\ast}-(\mu_{s}^{\ast})^{2}}
		\]
		
		where $U_{a}^{\ast}=\{u_{ka}^{\ast}|k\in K,u_{ka}^{\ast}\sim\mathcal{N}(\mu_{k},Q_{k}^{\ast}-\mu_{k}^{2}\}$ and $U_{b}^{\ast}$ is defined similarly (i.e. we assume the first two moments have converged but the correlations between the sequences have not, and in cases where $\mathbf{f}$ depends on a sequence of $\{\mathbf{u}_{k}^1\},...,\{\mathbf{u}_{k}^t\}$ we assume the constituent variables have all converged in this way). 
		We represent $u_{ka}^{\ast}, u_{kb}^{\ast}$ via a Cholesky decomposition as
		\begin{subequations} \label{eq_cholesky}
			\begin{gather}
			u_{ka}^{\ast}=\gS_kz_{ka}+\mu_{k}^{\ast}\\
			u_{kb}^{\ast}=\gS_k\left(C_{k}^{\ast}z_{ka}+\sqrt{1-(C_{k}^{\ast})^{2}}z_{kb}\right)+\mu_{k}^{\ast}
			\end{gather}
		\end{subequations}
		where $z_{ka},z_{kb}\sim\mathcal{N}(0,1)$ i.i.d. We thus have $\frac{\partial u^\ast_{kb}}{\partial C_{k}}=\sqrt{Q_{k}^{\ast}-(\mu_{k}^{\ast})^{2}}\left(z_{ka}-\frac{C_{k}}{\sqrt{1-C_{k}^{2}}}z_{kb}\right)$
		. Combining this with the fact that $\int\mathcal{D}zg(z)z=\int\mathcal{D}zg'(z)$
		for any $g(z)$, integration by parts gives for any $g_{1},g_{2}$
		
		\begin{equation} \label{eq_gg}
		\begin{array}{c}
		\frac{\partial}{\partial C_{k}}\int\mathcal{D}z_{ka}\mathcal{D}z_{kb}g_{1}(u_{ka}^{\ast})g_{2}(u_{kb}^{\ast})\\
		=\int\mathcal{D}z_{ka}\mathcal{D}z_{kb}g_{1}(u_{ka}^{\ast})\frac{\partial g_{2}(u_{kb}^{\ast})}{\partial u_{kb}^{\ast}}\frac{\partial u_{kb}^{\ast}}{\partial C_{c}}\\
		=\Sigma_{k}^{2}\int\mathcal{D}z_{ka}\mathcal{D}z_{kb}\frac{\partial g_{1}(u_{ka}^{\ast})}{\partial u_{ka}^{\ast}}\frac{\partial g_{2}(u_{kb}^{\ast})}{\partial u_{kb}^{\ast}}
		\end{array}
		\end{equation}
		
		Denoting $\Sigma_{s}^{2}=Q_{s}^{\ast}-(\mu_{s}^{\ast})^{2}$ and defining $\gS_{k},\gS_{k_2}$ similarly, we have 
		
		\begin{equation} \label{eq_dCakdCs}
		\frac{dC_{ka}}{dC_{s}}=\frac{\sigma_{ka}^{2}\Sigma_{s}^{2}}{\Sigma_{ka}^{2}}
		\end{equation}
		
		\[
		\begin{array}{c}
		\frac{dC_{kb}}{dC_{s}}=\frac{\sigma_{kb}^{2}\Sigma_{s}^{2}}{\Sigma_{kb}^{2}}*\\
		\int\left[\begin{array}{c}
		g_{k}(u_{ka}^{\ast})g_{k}(u_{kb}^{\ast})\\
		+\frac{\sigma_{k_2}^{2}\left(\Sigma_{s}^{2}C_{s}+(\mu_{s}^{\ast})^{2}\right)}{\Sigma_{k_2}^{2}}\frac{\partial g_{k}(u_{ka}^{\ast})g_{k}(u_{kb}^{\ast})}{\partial C_{k}}\frac{dC_{k}}{dC_{s}}
		\end{array}\right]\mathcal{D}_{k}
		\end{array}
		\]
		
		\[
		=\begin{array}{c}
		\frac{\sigma_{k_2}^{2}\Sigma_{s}^{2}}{\Sigma_{k_2}^{2}}*\\
		\int\left[\begin{array}{c}
		g_{k}(u_{ka}^{\ast})g_{k}(u_{kb}^{\ast})\\
		+\sigma_{k}^{2}\left(\begin{array}{c}
		\Sigma_{s}^{2}C_{s}\\
		+\mu_{s}^{\ast2}
		\end{array}\right)g_{k}'(u_{ka}^{\ast})g_{k}'(u_{kb}^{\ast})
		\end{array}\right]\mathcal{D}_{k}
		\end{array}
		\]
		
		where in the last equality we used \ref{eq_gg}. Using \ref{eq_gg} again gives
		
		\[
		\frac{\partial\mathcal{M}_{C}(\mu_{s}^{\ast},Q_{s}^{\ast},C_{s})}{\partial C_{k}}=\Sigma_{k}^{2}\int\mathcal{D}z_{ka}\mathcal{D}z_{kb}\frac{\partial f(u^\ast_{ka})}{\partial u^\ast_{ka}}\frac{\partial f(u^\ast_{kb})}{\partial u^\ast_{kb}}
		\]
		
		We now note, using \ref{eq_QC_k_def}, that if $C_{s}=1,\Sigma_z=1$
		we obtain $C_{k}=1$ and thus 
		
		\[
		\frac{dC_{k_2}}{dC_{s}}_{C_{s}=1,\Sigma_z=1}=\frac{\sigma_{k_2}^{2}\Sigma_{s}^{2}}{\Sigma_{k_2}^{2}}\left(\begin{array}{c}
		\int g_{k}^{2}(u_{k}^{\ast})\mathcal{D}_{k}\\
		+\sigma_{k}^{2}Q_{s}^{\ast}\int(g_{k}'(u_{k}^{\ast}))^{2}\mathcal{D}_{k}
		\end{array}\right)	\]
		
		\[
		\frac{\partial\mathcal{M}_{C}(\mu_{s}^{\ast},Q_{s}^{\ast},C_{s})}{\partial C_{k}}_{C_{s}=1,\Sigma_z=1}=\Sigma_{k}^{2}\int\mathcal{D}z_{k}(\frac{\partial f(u^\ast_{k})}{\partial u^\ast_{k}})^{2}
		\]
		
		\[
		\frac{\partial\mathcal{M}_{C}(\mu_{s}^{\ast},Q_{s}^{\ast},C_{s})}{\partial C_{s}}_{C_{s}=1,\Sigma_z=1}=\mathbb{E}\left[(\frac{\partial f(s)}{\partial s})^{2}\right]
		\]
		
		combining the above equations with \ref{eq_dCakdCs} and comparing the result to \ref{eq_m_1_def} completes the proof.
		
	\end{proof}
	
	\begin{lemma} \label{lembivariate}
		For any odd function $g(x)$, $\left(\begin{array}{c}
		x\\
		y
		\end{array}\right)\sim\mathcal{N}\left(\left(\begin{array}{c}
		\mu\\
		\mu
		\end{array}\right),\Sigma\left(\begin{array}{cc}
		1 & C\\
		C & 1
		\end{array}\right)\right),0\leq C\leq1$ we have $\mathbb{E}g(x)g(y) \geq0$. 
	\end{lemma}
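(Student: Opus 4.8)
The plan is to remove the mean and the variance by an affine change of variables and then diagonalise the correlation operator in the Hermite basis, where nonnegativity becomes manifest term by term; along the way it will emerge that the oddness hypothesis on $g$ is not actually used. Concretely, set $u=(x-\mu)/\sqrt{\Sigma}$ and $v=(y-\mu)/\sqrt{\Sigma}$. Then $(u,v)$ is a standard bivariate Gaussian with correlation $C\in[0,1]$, and $\mathbb{E}[g(x)g(y)]=\mathbb{E}[\tilde g(u)\tilde g(v)]$ where $\tilde g(t):=g(\mu+\sqrt{\Sigma}\,t)$. For the activations relevant to this paper $g$ is bounded (e.g. $\tanh$ or a sigmoid), so $\tilde g$ lies in $L^{2}(\mathbb{R},\phi)$ with $\phi$ the standard normal density, and this square-integrability is the only regularity the argument needs.

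Next I would expand $\tilde g$ in the orthonormal Hermite basis $\psi_{n}=\mathrm{He}_{n}/\sqrt{n!}$ of $L^{2}(\mathbb{R},\phi)$, writing $\tilde g=\sum_{n\ge 0}c_{n}\psi_{n}$ with $c_{n}=\mathbb{E}[\tilde g(Z)\psi_{n}(Z)]$ for $Z\sim\mathcal{N}(0,1)$. Mehler's formula expresses the density of $(u,v)$ as $\phi(s)\phi(t)\sum_{n\ge 0}C^{n}\psi_{n}(s)\psi_{n}(t)$; multiplying by $\tilde g(s)\tilde g(t)$, integrating, and invoking orthonormality (the interchange of sum and integral being justified by Parseval applied to the partial sums, valid since $\tilde g\in L^{2}(\phi)$ and $|C|\le 1$) gives $\mathbb{E}[\tilde g(u)\tilde g(v)]=\sum_{n\ge 0}C^{n}c_{n}^{2}$. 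Because $0\le C\le 1$ we have $C^{n}\ge 0$ for every $n$, while $c_{n}^{2}\ge 0$ trivially, so every summand is nonnegative and hence $\mathbb{E}[g(x)g(y)]=\sum_{n\ge 0}C^{n}c_{n}^{2}\ge 0$, which is the assertion.

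I do not expect a serious obstacle; the only delicate point is the legitimacy of the Hermite expansion and the term-by-term integration, which is automatic for the bounded nonlinearities considered here. If one prefers to avoid quoting Mehler's formula, an equivalent route is via Price's theorem: setting $F(C):=\mathbb{E}[g(x)g(y)]$ one has $F'(C)=\Sigma\,\mathbb{E}[g'(x)g'(y)]$, and inductively $F^{(n)}(0)=\Sigma^{n}\big(\mathbb{E}[g^{(n)}(x)]\big)^{2}\ge 0$ since at $C=0$ the two variables are independent; as $F$ is real-analytic on $(-1,1)$ and continuous on $[-1,1]$ it equals its Taylor series $\sum_{n\ge 0}F^{(n)}(0)C^{n}/n!$, which is nonnegative on $[0,1]$, and a routine mollification reduces the general case to smooth $g$.
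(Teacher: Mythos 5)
Your proof is correct, but it takes a genuinely different route from the paper's. The paper argues by a reflection symmetry of the density: it groups the four points $(\pm a,\pm b)$ with $a,b\ge 0$, uses oddness to relate the values of $g(x)g(y)$ there, and verifies directly that $p(a,b)+p(-a,-b)\ge p(a,-b)+p(-a,b)$ (the ratio collapses to $e^{4\alpha Cab}\,\cosh\left(2\mu\alpha(1-C)(a+b)\right)/\cosh\left(2\mu\alpha(1-C)(a-b)\right)\ge 1$), so on each such quadruple the positive contribution dominates. Your route --- recentring to a standard bivariate Gaussian and diagonalising in the Hermite basis so that $\mathbb{E}[\tilde g(u)\tilde g(v)]=\sum_{n\ge 0}C^{n}c_{n}^{2}$ --- is the Mehler-kernel argument, and it buys strictly more: it uses no oddness at all and needs only $\tilde g\in L^{2}(\phi)$, which is required anyway for the expectation to be finite. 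This is a real advantage, because the paper's pointwise step ``$g(a)g(b)\ge 0$ for $a,b\ge 0$'' tacitly assumes that $g$ does not change sign on the positive half-line; that holds for $\tanh$ (the only case used downstream) but not for every odd function, so your argument actually proves the lemma in the generality in which it is stated, and then some. What the paper's approach buys in exchange is that it is elementary and self-contained, with no orthogonal expansion, no Mehler/Price identity, and no interchange of sum and integral to justify. Either proof suffices for the convexity argument in Lemma \ref{lemconvex}.
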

	
	
	\begin{proof}[\bf{Proof of Lemma \ref{lembivariate}}] \label{lembivariatep}
		
		For $C=1$ the proof is trivial. We now assume $0\leq C<1$. We split
		up $\mathbb{R}^{2}$ into four orthants and consider a point $(a,b)$
		with $a,b\geq0$. We have $g(a)g(b)=g(-a)g(-b)=-g(a)g(-b)=-g(-a)g(b)\geq0$.
		We will show that $p(a,b)+p(-a,-b)>p(a,-b)+p(-a,b)$ where $p$ is the probability density function of $(x,y)$ and hence the points where the
		integrand is positive will contribute more to the integral than the ones
		where it is negative. Plugging these points into $p(x,y)$ gives
		
		\[
		\frac{p(a,b)+p(-a,-b)}{p(a,-b)+p(-a,b)}=\frac{e^{-\alpha\left[(a-\mu)^{2}-2C(b-\mu)(a-\mu)+(b-\mu)^{2}\right]}+e^{-\alpha\left[(a+\mu)^{2}-2C(b+\mu)(a+\mu)+(b+\mu)^{2}\right]}}{e^{-\alpha\left[(a-\mu)^{2}+2C(b+\mu)(a-\mu)+(b+\mu)^{2}\right]}+e^{-\alpha\left[(a+\mu)^{2}+2C(b-\mu)(a+\mu)+(b-\mu)^{2}\right]}}
		\]
		
		where $\alpha$ is some positive constant that depends on the determinant
		of the covariance (since $C<1$ the matrix is invertible and the determinant
		is positive). 
		
		
		
		
		\[
		=\frac{\exp\left(2\alpha Cab\right)}{\exp\left(-2\alpha Cab\right)}\frac{\cosh\left(2\mu\alpha(1-C)(a+b)\right)}{\cosh\left(2\mu\alpha(1-C)(a-b)\right)}\geq1
		\]
		
		where the last inequality holds for $0\leq C<1$. It follows that
		the positive contribution to the integral is larger than the negative
		one, and repeating this argument for every $(a,b)$ in the positive
		orthant gives the desired claim (if $a=0$ or $b=0$ the four points
		in the analysis are not distinct but the inequality still holds and
		the integrand vanishes in any case). 
	\end{proof}
	
	\begin{lemma} \label{lemconvex}
		The map \ref{eq_M} is convex in the case of the peephole LSTM.
	\end{lemma}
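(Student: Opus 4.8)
The plan is to work with the reduced one-dimensional map $\mathcal{M}_C(\mu_s^\ast,Q_s^\ast,\cdot)$ of eqn.~(\ref{eq_Mc}) and exhibit it as a sum of products of nonnegative, convex, nondecreasing functions of $C_s$. For the peephole LSTM the state update is $f(s,u_f,u_i,u_r)=\sigma(u_f)\,s+\sigma(u_i)\tanh(u_r)$, which is affine in $s$ and uses only the current pre-activations; this is why $\mathcal{M}_C$ is a genuine one-step map in this case. Since the parameters $\{\mathbf W_k,\mathbf U_k,\mathbf b_k\}_{k\in\{f,i,r\}}$ are mutually independent, the mean-field approximation gives, at $N\to\infty$, that the three pairs $(u_{ka},u_{kb})$ are mutually independent and each is independent of the state pair $(s_a,s_b)$, with $(u_{ka},u_{kb})$ jointly Gaussian with equal means $\mu_k$, equal variances $\gS_k^2=Q_k^\ast-\mu_k^2$ (fixed at the fixed point) and correlation $C_k$. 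First I would expand $\mathbb E[f(s_a,U_a)f(s_b,U_b)]$; by the above independences, all four cross terms factorize into products of single-variable Gaussian expectations and are therefore constant in $C_s$, while $\mathbb E[s_as_b]=\gS_s^2C_s+(\mu_s^\ast)^2$ with $\gS_s^2=Q_s^\ast-(\mu_s^\ast)^2$. Writing $\phi_g(C)=\mathbb E[g(X)g(Y)]$ for $(X,Y)$ bivariate Gaussian with equal means, equal variances and correlation $C$, this yields
\[
\mathcal{M}_C(C_s)=\phi_\sigma(C_f)\Big(C_s+\tfrac{(\mu_s^\ast)^2}{\gS_s^2}\Big)+\tfrac{1}{\gS_s^2}\,\phi_\sigma(C_i)\,\phi_{\tanh}(C_r)+\text{const}.
\]
By eqn.~(\ref{eq_QC_k_def}) each $C_k$ is affine in $C_s$ with positive slope $\sigma_k^2\gS_s^2/\gS_k^2$ and nonnegative intercept (using $\gS_z\in[0,1]$, $R>0$), so each $C_k$ maps $[0,1]$ increasingly into $[0,1]$.

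The next step is two facts about $\phi_g$ on $[0,1]$. Differentiating under the Gaussian integral exactly as in eqn.~(\ref{eq_gg}) (Price's theorem) gives $\phi_g'(C)=\gS^2\,\mathbb E[g'(X)g'(Y)]$, and applying the same identity to $g'$ gives $\phi_g''(C)=\gS^4\,\mathbb E[g''(X)g''(Y)]$. Hence $\phi_g$ is nondecreasing whenever $g'\ge 0$, and convex whenever $\mathbb E[g''(X)g''(Y)]\ge 0$. Now $\tanh$ is odd, so $\tanh''$ is odd; and $\sigma(-x)=1-\sigma(x)$ makes $\sigma-\tfrac12$ odd, hence $\sigma''$ is odd. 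Lemma~\ref{lembivariate} then gives $\mathbb E[g''(X)g''(Y)]\ge 0$ on $[0,1]$ for both $g=\sigma$ and $g=\tanh$, so $\phi_\sigma$ and $\phi_{\tanh}$ are convex and nondecreasing on $[0,1]$; also $\phi_\sigma\ge 0$ trivially, and $\phi_{\tanh}\ge 0$ on $[0,1]$ by Lemma~\ref{lembivariate} applied to $\tanh$ itself.

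Finally, for $C_s\in[0,1]$ the compositions $\phi_\sigma(C_f),\phi_\sigma(C_i),\phi_{\tanh}(C_r)$ are nonnegative, convex and nondecreasing functions of $C_s$ (composition of a convex nondecreasing function with an affine increasing one). Then $\phi_\sigma(C_f)\big(C_s+(\mu_s^\ast)^2/\gS_s^2\big)$ is convex because $(\phi\ell)''=\phi''\ell+2\phi'\ell'\ge 0$ when $\phi$ is nonnegative convex nondecreasing and $\ell$ is nonnegative affine increasing, and $\phi_\sigma(C_i)\,\phi_{\tanh}(C_r)$ is convex because $(\phi\psi)''=\phi''\psi+2\phi'\psi'+\phi\psi''\ge 0$ for two nonnegative convex nondecreasing functions. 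Summing, scaling by positive constants, and adding the constant term preserves convexity, so $\mathcal{M}_C$ is convex on $[0,1]$, which is the claim; combined with $\mathcal{M}_C([0,1])\subseteq[0,1]$ and $\mathcal{M}_C(1)=1$ for $\gS_z=1$, this immediately forces a unique stable fixed point. The main obstacle is the convexity of $\phi_g$: the resolution is the two-fold application of Price's theorem, which reduces it to the sign of $\mathbb E[g''(X)g''(Y)]$, after which the oddness of $\sigma''$ and $\tanh''$ lets Lemma~\ref{lembivariate} close the argument — at the cost of restricting to $C_s\in[0,1]$, where the $C_k$ stay nonnegative and which is in any case the range of interest.
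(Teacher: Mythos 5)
Your proposal is correct and follows essentially the same route as the paper's proof: the same decomposition of $\mathbb{E}[f_af_b]$ into a forget-gate term times the affine state covariance plus a product of input- and candidate-gate terms, the same double application of the Gaussian integration-by-parts identity (eqn.~\ref{eq_gg}) to reduce convexity of each factor to the sign of $\mathbb{E}[g''(X)g''(Y)]$, and the same invocation of Lemma~\ref{lembivariate} on the odd functions $\sigma''$, $\tanh''$ and $\tanh$, restricted to $C\in[0,1]$. The only difference is presentational — you package the final step as closure of convexity under products of nonnegative, nondecreasing, convex factors, whereas the paper expands the second derivative term by term and discards the nonnegative pieces.
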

	
	
	\begin{proof}[\bf{Proof of Lemma \ref{lemconvex}}] \label{lemconvexp}
		
		We have
		
		\[
		\mathcal{M}(C_{c}^{t})=\frac{\int f_{a}^{t}f_{b}^{t}\left((Q_{c}^{\ast}-(\mu_{c}^{\ast})^{2})C_{c}^{t}+\mu_{c}^{\ast2}\right)+\int i_{a}^{t}i_{b}^{t}\int r_{a}^{t}r_{b}^{t}+2\int f^{t}\int i^{t}\int r^{t}\mu_{c}^{\ast}-\mu_{c}^{\ast2}}{Q_{c}^{\ast}-\mu_{c}^{\ast2}}
		\]. 
		
		From the definition of $u_{kb}^{t}$ and $C_{k}^{t}$ we have 
		
		\[
		\frac{\partial u_{kb}^{t}}{\partial C_{c}^{t}}=\sqrt{Q_{k}^{\ast}-(\mu_{k}^{\ast})^{2}}\left(z_{ka}-\frac{C_{k}^{t}}{\sqrt{1-C_{k}^{t2}}}z_{kb}\right)\frac{\partial C_{k}^{t}}{\partial C_{c}^{t}}=\left(z_{ka}-\frac{C_{k}^{t}}{\sqrt{1-C_{k}^{t2}}}z_{kb}\right)\frac{\sigma_{k}^{2}(Q_{c}^{\ast}-\mu_{c}^{\ast2})}{\sqrt{Q_{k}^{\ast}-\mu_{k}^{\ast2}}}.
		\]
		
		and using $\int\mathcal{D}xg(x)x=\int\mathcal{D}xg'(x)$ we then obtain for any $g(x)$ 
		
		\begin{equation} \label{eqDDgg}
\begin{array}{c}
\frac{\partial}{\partial C_{c}^{t}}\int\mathcal{D}z_{ka}\mathcal{D}z_{kb}g(u_{ka}^{t})g(u_{kb}^{t})=\int\mathcal{D}z_{ka}\mathcal{D}z_{kb}g(u_{ka}^{t})g'(u_{kb}^{t})\frac{\partial u_{kb}^{t}}{\partial C_{c}^{t}}\\
=\sigma_{k}^{2}(Q_{c}^{\ast}-\mu_{c}^{\ast2})\int\mathcal{D}z_{ka}\mathcal{D}z_{kb}g'(u_{ka}^{t})g'(u_{kb}^{t})
\end{array}\end{equation}
		
		\[
		(Q_{c}^{\ast}-\mu_{c}^{\ast2}) \frac{\partial^{2}\mathcal{M}(C_{c})}{\partial C_{c}^{2}} = \frac{\partial^{2}}{\partial C_{c}^{2}}\left[\int f_{a}f_{b}\left((Q_{c}^{\ast}-(\mu_{c}^{\ast})^{2})C_{c}+\mu_{c}^{\ast2}\right)+\int i_{a}i_{b}\int r_{a}r_{b}\right]
		\]
		
		\[
		\begin{array}{c}
		=\frac{\partial^{2}\int f_{a}f_{b}}{\partial C_{c}^{2}}\left((Q_{c}^{\ast}-(\mu_{c}^{\ast})^{2})C_{c}+\mu_{c}^{\ast2}\right)+2\frac{\partial}{\partial C_{c}}\int f_{a}f_{b}(Q_{c}^{\ast}-(\mu_{c}^{\ast})^{2})\\
		+\frac{\partial^{2}\int i_{a}i_{b}}{\partial C_{c}^{2}}\int r_{a}r_{b}+\frac{\partial\int i_{a}i_{b}}{\partial C_{c}}\frac{\partial\int r_{a}r_{b}}{\partial C_{c}}+\int i_{a}i_{b}\frac{\partial^{2}\int r_{a}r_{b}}{\partial C_{c}^{2}}
		\end{array}
		\]
		
		From \ref{eqDDgg} and non-negativity of some of the integrands
		
		\[
		\geq\frac{\partial^{2}\int f_{a}f_{b}}{\partial C_{c}^{2}}\left((Q_{c}^{\ast}-(\mu_{c}^{\ast})^{2})C_{c}+\mu_{c}^{\ast2}\right)+\frac{\partial^{2}\int i_{a}i_{b}}{\partial C_{c}^{2}}\int r_{a}r_{b}+\int i_{a}i_{b}\frac{\partial^{2}\int r_{a}r_{b}}{\partial C_{c}^{2}}
		\].
		
		From Lemma \ref{lembivariate} we have $\int r_{a}r_{b}\geq0$ and $\frac{\partial^{2}\int g_{a}g_{b}}{\partial C_{c}^{2}}=\alpha\int g_{a}''g_{b}''\geq0$
		for $g=f,i,r$. We thus have 
		
		\[
		\frac{\partial^{2}\mathcal{M}(C_{c})}{\partial C_{c}^{2}}\geq0
		\]
		
		for $0\leq C_{c}\leq1$. 
		
		Convexity of this map has a number of consequences. One immediate one is that the map has at most one stable fixed point. 
	\end{proof}
	
	\section{The LSTM cell state distribution} \label{app_LSTM}
	\begin{algorithm}
		\caption{LSTM hidden state moment fixed point iteration using cell state sampling}\label{alg_Q}
		\begin{algorithmic}
			\Function{FixedPointIteration}{$\mu_s^{t-1},Q_s^{t-1},\Theta, n_s,n_{\text{iters}}$}
			\State $Q_k^{t-1} \gets \Call{CalculateQk}{Q_s^{t-1},\Theta}$ \Comment{Using \ref{eq_QC_k_def}} 
			\State Initialize $\mathbf{c} \in \mathbb{R}^{n_s}$
			\For{$i \gets 1$ to $n_{\text{iters}}$} 
			
			\State $\mathbf{u}_i,\mathbf{u}_f,\mathbf{u}_r \gets \Call{SampleUs}{Q_k^{t-1},\Theta}$ \Comment{Using \ref{eq_v_gauss}}
			\State $\mathbf{c} \gets \Call{UpdateC}{\mathbf{c},\mathbf{u}_i,\mathbf{u}_f,\mathbf{u}_r}$ \Comment{Using \ref{eq_c_update}}
			\EndFor
			
			$(\mu_s^{t},Q_s^{t}) \gets \Call{CalculateM}{\mu_s^{t-1},Q_s^{t-1},\Theta, \mathbf{c}}$ \Comment{Using \ref{eq_M}}
			
			\Return $(\mu_s^{t},Q_s^{t})$
			\EndFunction
		\end{algorithmic}
	\end{algorithm}
	%
	In the case of the peephole LSTM, since the $\{\mathbf{u}_{k}^t\}$ depend on the second moments of $\mathbf{c}^{t-1}$ and $\mathbf{f}$ is an affine function of $\mathbf{c}^{t-1}$, one can write a closed form expression for the dynamical system (\ref{eq_M}) in terms of first and second moments. In the standard LSTM however, the relevant state $\mathbf{h}^{t}$ depends on the cell state, which has non-trivial dynamics.
	
	The cell state differs substantially from other random variables that appear in this analysis since it cannot be expressed as a function of a finite number of variables that are Gaussian at the large $N$ and $t$ limit (see Table \ref{tab_rnns}). Since at this limit the $\mathbf{u}_{i}^{t}$ are independent, by examining the cell state update equation 
	\begin{equation} \label{eq_c_update}
	\mathbf{c}^{t}=\sigma(\mathbf{u}_{f}^{t})\circ\mathbf{c}^{t-1}+\sigma(\mathbf{u}_{i}^{t})\circ\tanh(\mathbf{u}_{r}^{t})
	\end{equation}
	we find that the asymptotic cell state distribution is that of a \textit{perpetuity}, which is a random variable $X$ that obeys $X \overset{d}{=} XY+Z$ where $Y,Z$ are random variables and $\overset{d}{=}$ denotes equality in distribution. The stationary distributions of a perpetuity, if it exists, is known to exhibit heavy tails \cite{goldie1991implicit}. 
	Aside from the tails, the bulk of the distribution can take a variety of different forms and can be highly multimodal, depending on the choice of $\Theta$ which in turn determines the distributions of $Y,Z$. 
	
	In practice, one can overcome this difficulty by sampling from the stationary cell state distribution, despite the fact that the density and even the likelihood have no closed form. For a given value of $Q_h$, the variables $\mathbf{u}_k^t$ appearing in (\ref{eq_c_update}) can be sampled since their distribution is given by (\ref{eq_v_gauss}) at the large $N$ limit. The update equation (\ref{eq_c_update}) can then be iterated and the resulting samples approximate well the stationary cell state distribution for a range of different choices of $\gV$, which result in a variety of stationary distribution profiles (see Appendix \ref{app_exp}3). The fixed points of (\ref{eq_M}) can then be calculated numerically as in the deterministic cases, yet care must be taken since the sampling introduces stochasticity into the process. An example of the fixed point iteration eqn.~(\ref{eq_M}) implemented using sampling is presented in Algorithm \ref{alg_Q}. The correlations between the hidden states can be calculated in a similar fashion. In practice, once the number of samples $n_s$ and sampling iterations $n_{\text{iters}}$ is of order $100$ reasonably accurate values for the moment evolution and the convergence rates to the fixed point are obtained (see for instance the right panel of Figure \ref{fig_untied_heatmaps}). The computational cost of the sampling is linear in both $n_s,n_{\text{iters}}$ (as opposed to say simulating a neural network directly in which case the cost is quadratic in $n_s$). 

	\section{Additional Experiments and Details of Experiments} \label{app_exp}
	\subsection{Dynamical system}
	We simulate the dynamics of $Q_s^t$ in eqn.~(\ref{eq_M}) for a GRU using inputs with $\Sigma_z^t = 0$ for $t < 10$ and $\Sigma_z^t=1$ for $t \geq 10$. Note that this evolution is independent of the evolution of $C_s^t$. The results show good agreement in the untied case between the calculation at the large $N$ limit and the simulation, as shown in Figure \ref{fig_GRU_MC_results}.
	\begin{figure*}[h]
		\begin{minipage}{\textwidth}
			\centering{
				\begin{subfigure}[t]{0.4\textwidth}
					\includegraphics[width=\textwidth]{\figspath/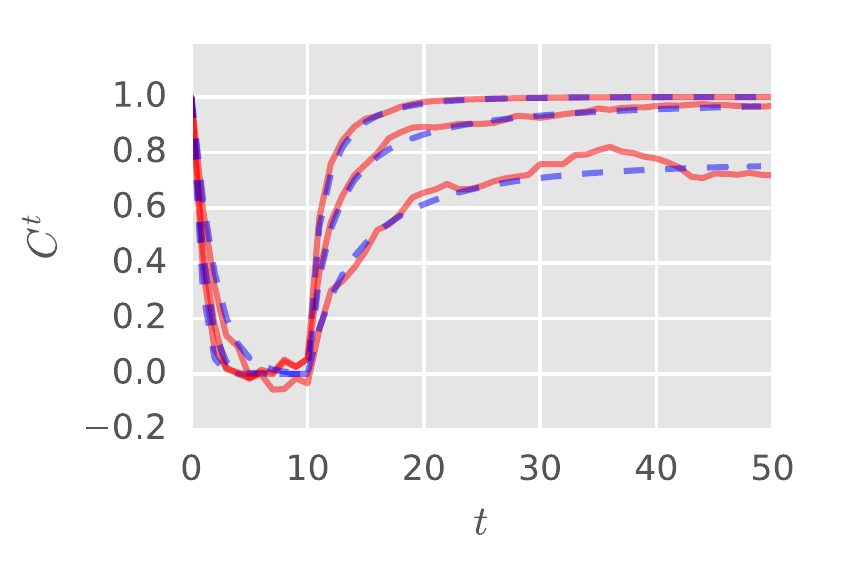}
				\end{subfigure}
				~ 
				\begin{subfigure}[t]{0.4\textwidth}
					\includegraphics[width=\textwidth]{\figspath/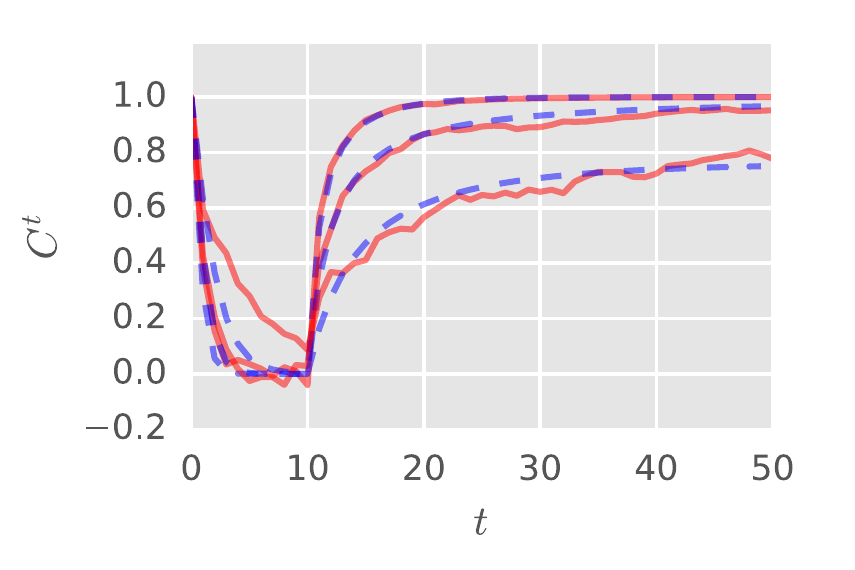}
				\end{subfigure}    
				
				\begin{subfigure}[b]{0.4\textwidth}
					\includegraphics[width=0.95\textwidth]{\figspath/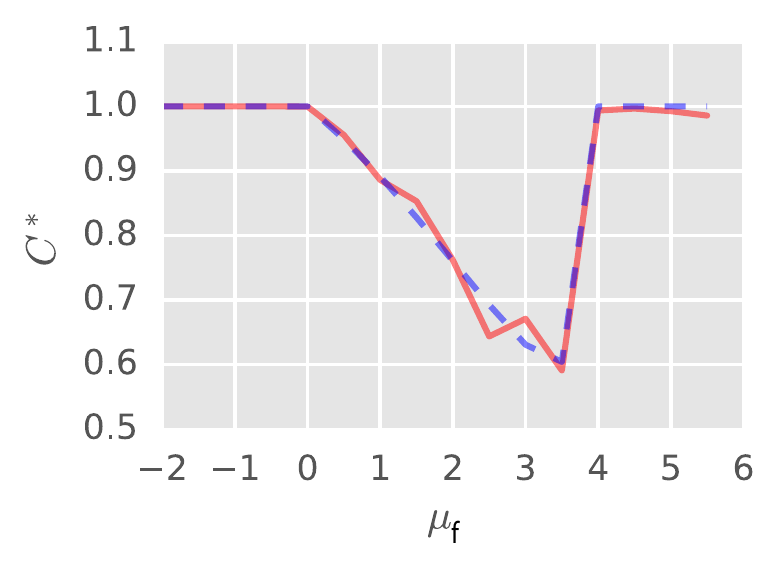}
				\end{subfigure}
				~
				\begin{subfigure}[b]{0.4\textwidth}
					\includegraphics[width=0.95\textwidth]{\figspath/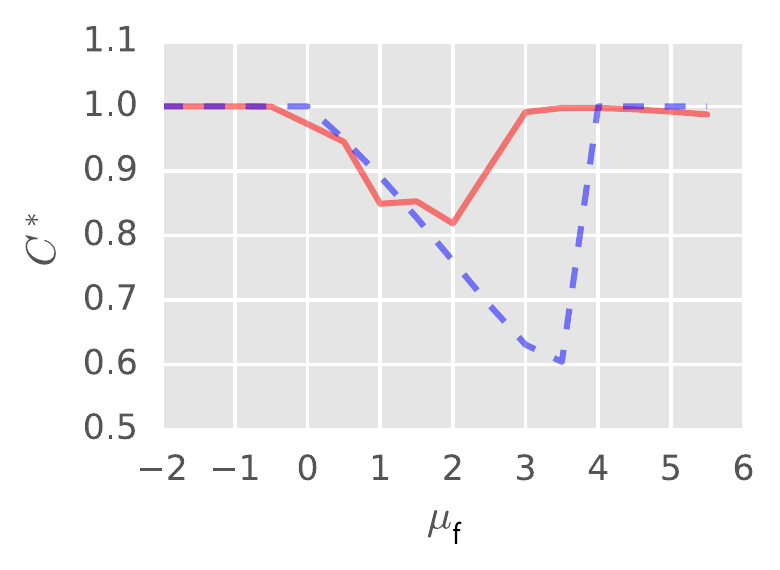}
				\end{subfigure}
			}
		\end{minipage}
		\caption{\textit{Top}: Dynamics of the correlations (\ref{eq_Mc}) for the GRU with 3 different values of $\mu_f$ as a function of time. The dashed line is the prediction from the mean field calculation, while the red curves are from a simulation of the network with i.i.d. Gaussian inputs. \textit{Left}: Network with untied weights. \textit{Right}: Network with tied weights. \textit{Bottom}: The predicted fixed point of (\ref{eq_Mc}) as a function of different $\mu_f$. \textit{Left}: Network with untied weights. \textit{Right}: Network with tied weights.}
		\label{fig_GRU_MC_results}
	\end{figure*}
	\subsection{Heatmaps}
	In Figure \ref{fig_tied_heatmaps} we present results of training on the same task shown in Figure \ref{fig_untied_heatmaps} with tied weights, showing the deviations resulting from the violation of the untied weights assumption.
	\begin{figure} 
		\begin{minipage}{\textwidth}
			\centering
			\includegraphics[width=0.9\textwidth]{\figspath/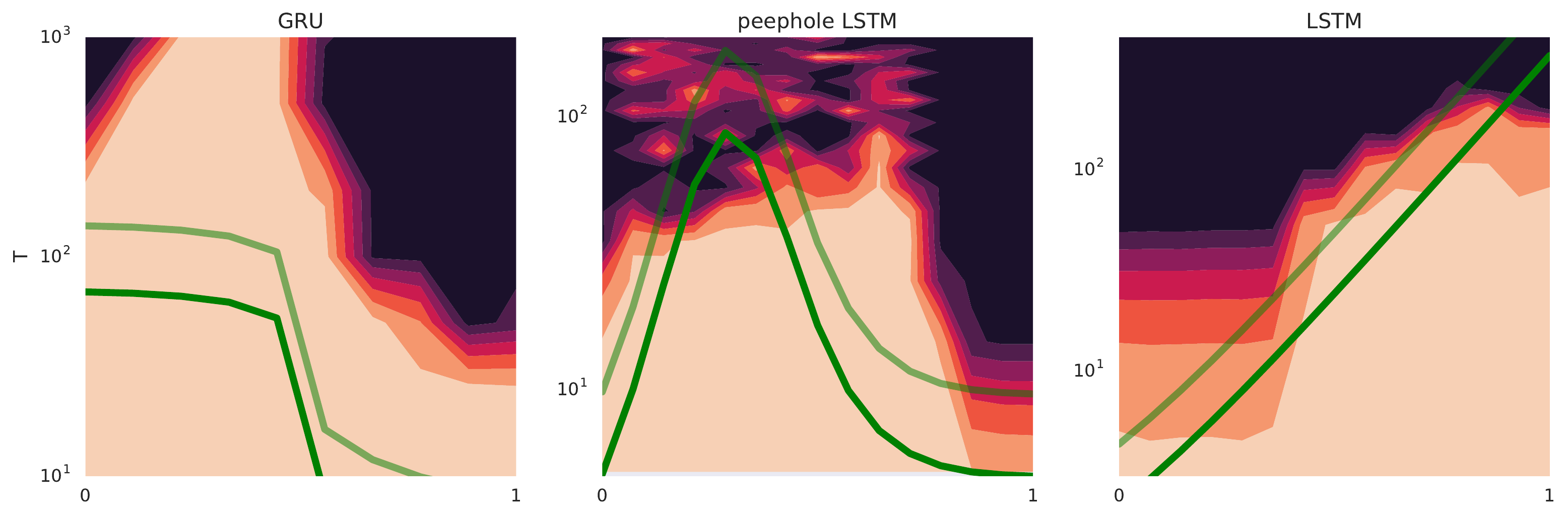}
		\end{minipage}
		\caption{Training accuracy on the padded MNIST classification task described in \ref{sec_padded} at different sequence lengths $T$ and hyperparameter values $\gV$ for networks with \textit{tied} weights. The green curves are multiples of the forward propagation time scale $\gx$ calculated under the \textit{untied} assumption. We generally observe improved performance when the predicted value of $\gx$ is high, yet the behavior of the network with tied weights is not part of the scope of the current analysis and deviations from the prediction are indeed observed. }
		\label{fig_tied_heatmaps}
	\end{figure}
	\subsection{Sampling the LSTM cell state distribution}
	As described in Appendix \ref{app_LSTM}, calculating the signal propagation time scale and the moments of the state-to-state Jacobian for the LSTM requires integrating with respect to the stationary cell state distribution. The method for doing this is described in Algorithm \ref{alg_Q}. As is shown in Figure \ref{fig_cell_state_hists}, this distribution can take different forms based on the choice of initialization hyperparameters $\Theta$, but in all cases we have studied the proposed algorithm appears to provide a reasonable approximation to this distribution efficiently. The simulations are obtained by feeding a network of width $N=200$ with i.i.d. Gaussian inputs. 
	\begin{figure} 
		\centering
		\includegraphics[width=0.5\textwidth]{\figspath/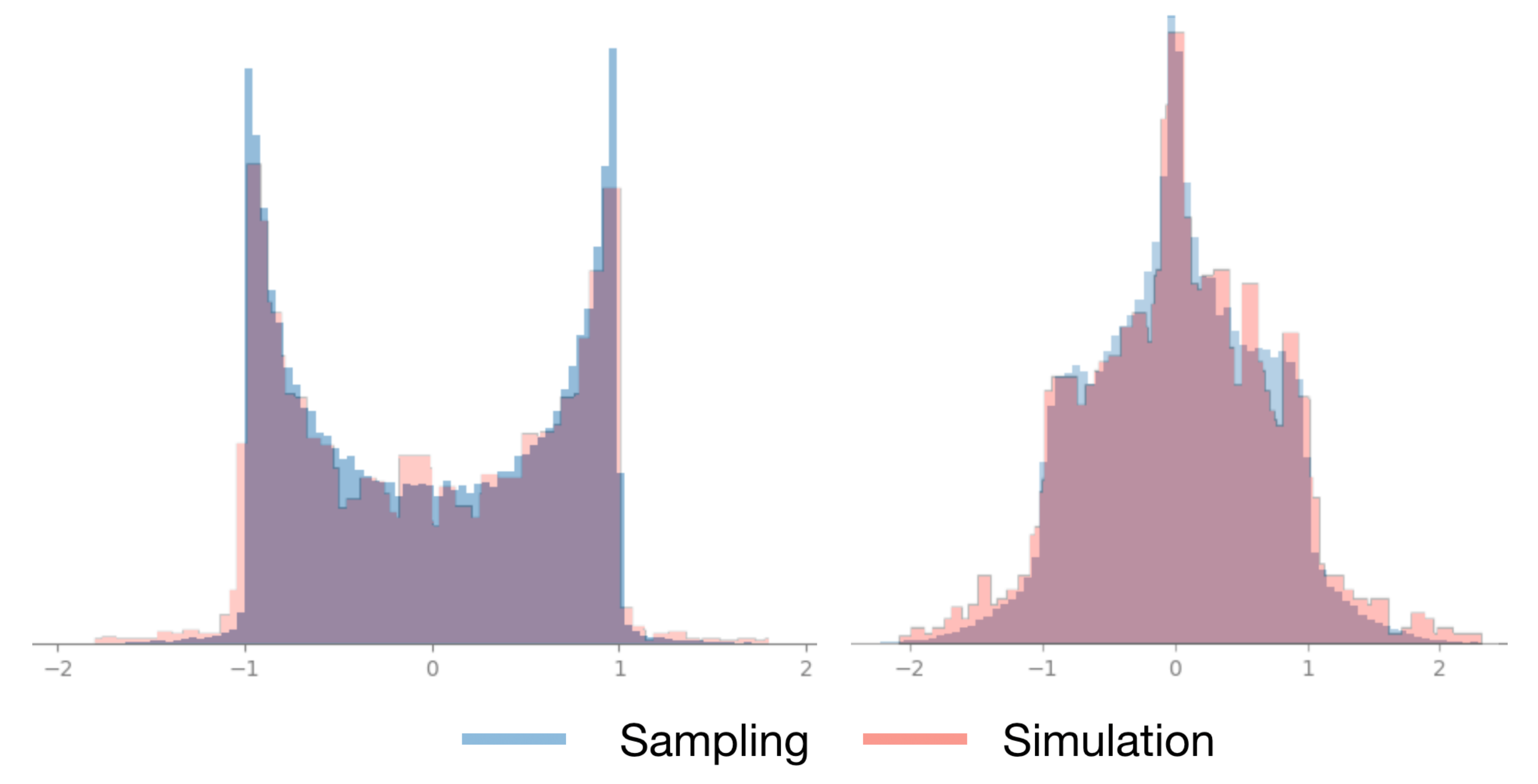}
		\caption{Sampling from the LSTM cell state distribution using Algorithm \ref{alg_Q}, showing good agreement with the cell state distribution obtained by simulating a network with untied weights. The two panels correspond to two different choices of $\Theta$}
		\label{fig_cell_state_hists}
	\end{figure}
	\subsection{Critical initializations}
	Peephole LSTM: 
	\[\begin{array}{c}
	\mu_{i},\mu_{r},\mu_{o},\rho_{i}^{2},\rho_{f}^{2},\rho_{r}^{2},\rho_{o}^{2},\nu_{i}^{2},\nu_{f}^{2},\nu_{r}^{2},\nu_{o}^{2}=0\\
	\mu_{f}=5\\
	\sigma_{i}^{2},\sigma_{f}^{2},\sigma_{r}^{2},\sigma_{o}^{2}=10^{-5}
	\end{array}\]
	
	LSTM (Unrolled CIFAR-10 task): 
	\[\begin{array}{c}
	\mu_{i},\mu_{r},\mu_{o},\rho_{i}^{2},\rho_{f}^{2},\rho_{r}^{2},\rho_{o}^{2},\nu_{f}^{2},\nu_{o}^{2}=0\\
	\nu_{i}^{2},\nu_{r}^{2},\sigma_{o}^{2},\mu_{f}=1\\
	\sigma_{i}^{2},\sigma_{f}^{2},\sigma_{r}^{2}=10^{-5}
	\end{array}\]
	The value of $\mu_{f}$ was found by a grid search, since for this task information necessary to solve it did not require signal propagation across the entire sequence. In other words, classification of an image can be achieved with access only to the last few rows of pixels. The utility of the analytical results in this case, as mentioned in the text, is to greatly constrain the hyperparameter space of potentially useful initializations from theoretical considerations. 
	\subsection{Standard initialization}
	LSTM and peephole LSTM: 
	
	Kernel matrices (corresponding to the choice of $\gn_k^2$) : Glorot uniform initialization \cite{glorot2010understanding}
	
	Recurrent matrices (corresponding to the choice of $\gs_k^2$): Orthogonal initialization (i.i.d. Gaussian initialization with variance $1/N$ also used giving analogous performance)

	\[\begin{array}{c}
	\mu_{i},\mu_{r},\mu_{o},\rho_{i}^{2},\rho_{f}^{2},\rho_{r}^{2},\rho_{o}^{2}=0\\
	\mu_{f}=1\\
	\end{array}\]
	
	\subsubsection{Long sequence tasks}
	Learning rate scan: 8 equally spaced points between $10^{-2}$ and $10^{-5}$.
	Validation set: 10000 images for MNIST and CIFAR-10

\end{appendices}

\end{document}